\documentclass[10pt]{article} % For LaTeX2e

% Must come first: pass options to xcolor before any package loads it
\PassOptionsToPackage{dvipsnames}{xcolor}

% TMLR style
% For submission: use this
% \usepackage{tmlr}
% For camera-ready:
\usepackage[accepted]{tmlr}
% For preprint (de-anonymized):
% \usepackage[preprint]{tmlr}

% Optional math commands from https://github.com/goodfeli/dlbook_notation
%%%%% NEW MATH DEFINITIONS %%%%%

\usepackage{amsmath,amsfonts,bm}

% Mark sections of captions for referring to divisions of figures

% Highlight a newly defined term

% Figure reference, lower-case.

% Figure reference, capital. For start of sentence

% Section reference, lower-case.

% Section reference, capital.

% Reference to two sections.

% Reference to three sections.

% Reference to an equation, lower-case.
\def\eqref#1{equation~\ref{#1}}
% Reference to an equation, upper case

% A raw reference to an equation---avoid using if possible

% Reference to a chapter, lower-case.

% Reference to an equation, upper case.

% Reference to a range of chapters

% Reference to an algorithm, lower-case.

% Reference to an algorithm, upper case.

% Reference to a part, lower case

% Reference to a part, upper case

\def\1{\bm{1}}

% Random variables

% rm is already a command, just don't name any random variables m

% Random vectors

% Elements of random vectors

% Random matrices

% Elements of random matrices

% Vectors

% Elements of vectors

% Matrix

% Tensor
\DeclareMathAlphabet{\mathsfit}{\encodingdefault}{\sfdefault}{m}{sl}
\SetMathAlphabet{\mathsfit}{bold}{\encodingdefault}{\sfdefault}{bx}{n}

% Graph

% Sets

% Don't use a set called E, because this would be the same as our symbol
% for expectation.

% Entries of a matrix

% entries of a tensor
% Same font as tensor, without \bm wrapper

% The true underlying data generating distribution

% The empirical distribution defined by the training set

% The model distribution

% Stochastic autoencoder distributions

 % Laplace distribution

\newcommand{\R}{\mathbb{R}}

% Wolfram Mathworld says $L^2$ is for function spaces and $\ell^2$ is for vectors
% But then they seem to use $L^2$ for vectors throughout the site, and so does
% wikipedia.

 % See usage in notation.tex. Chosen to match Daphne's book.

% Basic packages
\usepackage{url}
\usepackage[colorlinks=true, linkcolor=red, citecolor=blue, urlcolor=BrickRed]{hyperref}
\usepackage{graphicx}
\usepackage{subfigure}
\usepackage{subcaption}
\usepackage{booktabs}
\usepackage{enumitem}
\usepackage{nicefrac}
\usepackage{microtype}

% Algorithm packages
\usepackage{algorithm}
\usepackage{algpseudocode}

% Additional symbols & math
\usepackage{amsmath}
\usepackage{amssymb}
\usepackage{mathtools}
\usepackage{amsthm}
\usepackage{mathrsfs}
\usepackage{makecell}

% THEOREMS
\usepackage[capitalize,noabbrev]{cleveref}
\newtheorem{theorem}{Theorem}[section]
\newtheorem{proposition}[theorem]{Proposition}
\newtheorem{lemma}[theorem]{Lemma}

\theoremstyle{definition}

\newtheorem{assumption}[theorem]{Assumption}
\theoremstyle{remark}
\newtheorem{remark}[theorem]{Remark}

\crefname{theorem}{Theorem}{Theorems}
\Crefname{theorem}{Theorem}{Theorems}

\crefname{lemma}{Lemma}{Lemmas}
\Crefname{lemma}{Lemma}{Lemmas}

\crefname{proposition}{Proposition}{Propositions}
\Crefname{proposition}{Proposition}{Propositions}

\crefname{corollary}{Corollary}{Corollaries}
\Crefname{corollary}{Corollary}{Corollaries}

\crefname{definition}{Definition}{Definitions}
\Crefname{definition}{Definition}{Definitions}

\crefname{assumption}{Assumption}{Assumptions}
\Crefname{assumption}{Assumption}{Assumptions}

\crefname{remark}{Remark}{Remarks}
\Crefname{remark}{Remark}{Remarks}

% For pretty boxes
\usepackage{wrapfig}
\usepackage{tcolorbox}
\newcounter{boxcounter}
\renewcommand{\theboxcounter}{\arabic{boxcounter}} % Use Arabic numerals
\newenvironment{numberedbox}[1][]{%
    \refstepcounter{boxcounter}%
    \tcolorbox[
        colframe=black,
        colback=white,
        boxrule=0.3pt,
        sharp corners,
        title=Box \theboxcounter: #1,
        fonttitle=\bfseries,
        coltitle=black,
        colbacktitle=softblue,
        left=0pt, right=0pt, top=1pt, bottom=1pt,
        width=\columnwidth,
    ]
}{\endtcolorbox}

% Colors (must come after \PassOptionsToPackage)
\usepackage{xcolor}
\definecolor{softblue}{rgb}{0.85, 0.95, 1}
\definecolor{BrickRed}{rgb}{0.8, 0.25, 0.33} % Needed for urlcolor

% Custom commands

% Title and author
\title{Dynamic Pricing in the Linear Valuation Model using Shape Constraints}

% Authors must not appear in the submitted version. They should be hidden
% as long as the tmlr package is used without the [accepted] or [preprint] options.
% Non-anonymous submissions will be rejected without review.

\author{\name Daniele Bracale\thanks{Corresponding author} \email dbracale@umich.edu \\
      \addr Department of Statistics\\
      University of Michigan
      \AND
      \name Moulinath Banerjee \email moulib@umich.edu \\
      \addr Department of Statistics\\
      University of Michigan
        \AND
      \name Yuekai Sun \email yuekai@umich.edu \\
      \addr Department of Statistics\\
      University of Michigan
      \AND
      \name Kevin Stoll \email kstoll@welltower.com\\
      \addr Welltower Inc.
      \AND
      \name Salam Turki \email STurki@welltower.com\\
      \addr Welltower Inc.}

% The \author macro works with any number of authors. Use \AND 
% to separate the names and addresses of multiple authors.

\begin{document}
\maketitle

\begin{abstract}
We propose a shape-constrained approach to dynamic pricing for censored data in the linear valuation model eliminating the need for tuning parameters commonly required by existing methods. Previous works have addressed the challenge of unknown market noise distribution $F_0$ using strategies ranging from kernel methods to reinforcement learning algorithms, such as bandit techniques and upper confidence bounds (UCB), under the assumption that $F_0$ satisfies Lipschitz (or stronger) conditions. In contrast, our method relies on isotonic regression under the weaker assumption that $F_0$ is $\alpha$-H\"older continuous for some $\alpha \in (0,1]$, for which we derive a regret upper bound. Simulations and experiments with real-world data obtained by Welltower Inc (a major healthcare Real Estate Investment Trust) consistently demonstrate that our method attains lower empirical regret in comparison to several existing methods in the literature while offering the advantage of being tuning-parameter free.
\end{abstract}

\section{Introduction}
Dynamic pricing is the process of continuously adjusting product prices in response to customer feedback based on statistical learning and policy optimization. As a fundamental aspect of revenue management, dynamic pricing has been widely applied across various industries. A key challenge in this area is balancing the need to explore customer demand with exploiting current knowledge to set optimal prices that maximize revenue. This tradeoff between exploration and exploitation has been extensively studied in fields such as statistics, machine learning, economics, and operations research \cite{besbes2009dynamic,keskin2014dynamic,cheung2017dynamic,cesa2019dynamic, den2020discontinuous}. A large literature focuses on an important dynamic pricing problem where contextual information, such as product features and market conditions, is available at each time step. By leveraging this contextual data, we aim to refine pricing strategies and improve revenue outcomes. This approach, known as feature-based or contextual pricing, allows for more customized pricing decisions that better reflect product heterogeneity, leading to more effective revenue management in today’s data-rich environment.

This paper focuses on the problem of pricing a single product over a finite time horizon $T$, where the market value $v_t$ of the product is unknown to the seller and may vary over time $t=1,2,\dots, T$. The market value is modeled as a linear function of observed features (covariates) of the product
\begin{equation}\label{def:v_t_initial}
v_{t}=\theta_{0}^{\top} x_{t}+z_{t},
\end{equation}
where $x_{t} \in \mathbb{R}^{d}$ contains $1$ in the first component to consider for the intercept, $\theta_0$ is some unknown parameter and $x_t$ are independent of the noise $z_t$ that are i.i.d. with unknown cumulative distribution function (c.d.f.) $F_0$. After the seller proposes a price $p_t=p_t(x_t)$, they observe whether the item is sold or not, i.e. $y_t=\mathbf{1}\{p_t \leq v_t\}$ and collect revenue $p_ty_t$. The seller aims to design a policy that maximizes the total revenue $\sum_{t=1}^T p_ty_t$, given the uncertainty in the market value and the limited information available to the seller, that is $(p_t,x_t,y_t)$. The determination of the optimal revenue entails learning the model parameters $(\theta_0,F_0)$ for which various statistical tools have been employed such as kernel-based methods, bandit technique, and UCB \citep{fan2021policy, luo2022contextual, luo2024distribution, xu2022towards, tullii2024improved}. 

Building on the semi-parametric structure of the model and recent advances in shape-constrained statistics, we propose a novel policy that requires minimal assumptions about the underlying distribution of market noise. Specifically, we estimate $\theta_0$ using ordinary least squares (OLS) and $F_0$ using non-parametric least squares (NPLS), subject to the natural constraint that $F_0$ is non-decreasing.

A key advantage of our shape-constrained approach is that it is entirely data-driven and does not require the specification of any tuning parameters, unlike existing non-parametric methods. For example, the kernel-based technique proposed by \cite{fan2021policy} requires bandwidth selection for optimizing the error of the estimator. In contrast, the UCB-based strategy of \cite{luo2022contextual} requires a set of subjective parameters including a tuning parameter. 
%In contrast, our shape-constrained NPLSE is entirely data-driven, allowing the shape of the unknown $F_0$ to be determined directly by the data, without reliance on external choices.

\paragraph{Contribution.} Our main contributions are:
\begin{enumerate}[leftmargin=*, label=(\arabic*)]
\setlength\itemsep{0.5pt}
\item We propose a new tuning parameter-free method, unlike existing non-parametric methods for estimating the market noise distribution \( F_0 \), leveraging the shape constraint that \( F_0 \) is non-decreasing and assuming only that \( F_0 \) is $\alpha$-H\"older continuous for some $\alpha \in (0,1]$.

\item We derive an upper bound on the total expected regret of order $\widetilde{\mathcal{O}}(T^{\nu(\alpha)}d^{\nicefrac{\alpha}{2+\alpha}})$, where $\nu:(0,1]\rightarrow \mathbb{R}$, 
\begin{equation}\label{eq:zeta}
\textstyle \nu(\alpha)\triangleq \tfrac{2}{2+\alpha}\mathbf{1}\{\alpha \in (0,1/2)\}+\tfrac{2\alpha+1}{3\alpha+1}\mathbf{1}\{\alpha \in [1/2,1]\},
\end{equation}
and $\widetilde{\mathcal{O}}$ excludes log factors (under Lipshitzianity of $F_0$, this rates becomes $\widetilde{\mathcal{O}}(d^{1/3}T^{3/4})$), and we provide a thorough assessment of its empirical performance with comparisons to existing algorithms through a number of simulations, as well as an emulation experiment based on real data. Our algorithm shows strong empirical performance: in particular, it dominates the algorithm proposed by  \citet{tullii2024improved} in their simulation setting up to very large time horizons. Additionally, our algorithm when applied to a real data set obtained by Welltower Inc continues to demonstrate stronger performance than \citet{tullii2024improved,luo2022contextual} and is competitive with the nonparametric method proposed by \citet{
fan2021policy}, though the latter relies on stronger smoothness assumptions. 
\item Beyond the application of antitonic regression, our work involves establishing a concentration inequality for the uniform norm of the antitonic regression estimator error, which is required to derive the expected regret upper bound.  Although the existing literature on isotonic regression explores the rate of convergence of the uniform error,  explicit tail probability bounds (stronger than $O_P$ statements) of the type presented in this work (see \cref{thm:Dumbgen}) appear to be missing. 
\end{enumerate}

\subsection{Related Works}\label{sec:related}
The linear valuation model for contextual dynamic pricing, as defined in \cref{def:v_t_initial}, has been extensively studied under various assumptions. Recent works have explored statistical models—both linear and their extensions—for the pricing problem, assuming that $F_0$ (the noise distribution) is either known, partially known\footnote{Meaning $F_0$ is unknown but belongs to a parameterized family.} \citep{miao2019context, ban2021personalized, javanmard2019dynamic, golrezaei2019dynamic}, or fully unknown \citep{fan2021policy, xu2022towards, luo2022contextual}. For comprehensive overviews of dynamic pricing from a broader perspective, we refer readers to \cite{den2015dynamic} and \cite{kumar2018research}.

We focus on the results most relevant to our 
work—specifically, the case where both the parameter $\theta_0$ and the distribution $F_0$ are fully unknown. In this setting, \cite{fan2021policy} estimate $F_0$ using kernel methods and derive a regret upper bound of $\widetilde{\mathcal{O}}((dT)^{\frac{2m+1}{4m-1}})$, where $m \geq 2$ denotes the degree of smoothness of $F_0$. In the realm of reinforcement learning, \cite{luo2022contextual} introduces the Explore-then-UCB strategy, which balances revenue maximization, estimation of the linear valuation parameter, and nonparametric learning of the noise distribution. Under Lipschitz continuity on $F_0$, their approach achieves a regret rate of $\widetilde{\mathcal{O}}(d T^{3/4})$, and under (an additional) second-order smoothness assumption, a regret of $\widetilde{\mathcal{O}}(d^2T^{2/3})$. However, their regret bounds depend on a regularization parameter $\lambda > 0$, which is hard to tune dynamically, and the impact of the choice of $\lambda$ on the regret is not clearly described. \cite{xu2022towards} propose the D2-EXP4 algorithm, which is based on discretizing both the parameter space of $\theta_0$ and $F_0$. With appropriate choices of the discretization parameters, they establish a regret upper bound of $\widetilde{O}(T^{3/4} +\sqrt{d} T^{2/3})$. However, as noted in \citet[Section 6]{xu2022towards}, they were unable to perform numerical experiments on D2-EXP4 due to the exponential time complexity of the EXP4 learner with a policy set of size $2^{T^{1/4}}$, making their algorithm impractical for application. 
%As noted in \cref{Comparison}, their upper bound only requires the revenue function $r(p) = pF_0(p-x_t^{\top}\theta_0)$ to be \emph{half-Lipschitz}; however, 
Furthermore, Assumption 1 of their paper requires their $x_t$ and $\theta_0$ to have \emph{non-negative entries}. While they claim that this assumption entails no loss of generality, this assumption is heavily used in the proofs of Theorem 6 and Theorem 5 of their work, and it is far from clear whether their derivations are generalizable to the situation when such sign constraints are not imposed. While the positivity of covariates can be ensured under boundedness by adding constants and adjusting the intercept parameter, the assumption that all covariates have a positive impact on the valuation is quite unrealistic for any regression model. 
%(when they use $u \leq \hat u$ where $u = x_t^{\top}\theta^*$ and $\hat u = x_t ^{\top} \lceil \theta^* \rceil_{\Delta}$ and several consequent inequalities) and in Theorem 5 (see the final part of the Sketch of Theorem 5). Moreover, assuming that all the covariates have a positive impact on the valuation is a strong assumption (more generally in any regression problem) as well as for the covariates.
 
In contrast to \cite{fan2021policy,luo2022contextual, xu2022towards}, we propose a tuning parameter-free policy that achieves a regret upper bound of order $\widetilde{\mathcal{O}}(T^{3/4} d^{1/3})$ when $\alpha =1$ (i.e. $F_0$ is Lipschitz). Furthermore, estimating the parameters $\theta_0$ and $F_0$ is computationally efficient: $\theta_0$ is estimated using ordinary least squares (OLS), and $F_0$ is estimated via isotonic regression\footnote{Alternatively, estimating $S_0=1 - F_0$ using antitonic regression.} using the Pool Adjacent Violators Algorithm (PAVA) introduced by \citet{robertson1988order}, which, in our problem, has a computational complexity of $\mathcal{O}(d^{\nicefrac{\alpha}{2+\alpha}}T^{\nu(\alpha)})$ (see \cref{sec:complexity}). 

Very recent work by \cite{tullii2024improved} provides a \textit{UCB-LCB-based} algorithm named VAPE (Valuation Approximation-Price Elimination). The main idea is to update the estimate of $\theta_0$ at time $t$ when $x_t$ is far from previously observed covariate values; otherwise, update the UCB-LCB around $F_0$ and deploy the optimal price. They prove that their regret is upper bounded by $\widetilde{\mathcal{O}}((dT)^{2/3})$ under Lipschitz assumption on $F_0$ (i.e. $\alpha =1$), which attains the lower bound in $T$, $\Omega(T^{2/3})$ established in \cite{xu2022towards}. We summarize the regret upper bounds and the underlying assumptions in \cref{Comparison}. 

\begin{table*}[t]
\caption{Comparison of customer valuation model-based contextual dynamic pricing algorithms with stochastic contexts under the same assumptions on $\theta_0$ and similar smoothness assumptions on $F_0$. Notes: 
% \\ ($-$) For the assumptions in \cite{xu2022towards}'s work we defer to a complete discussion in Section \ref{sec:related}.
($\mathdollar$): $\nu(\alpha)$ is defined in \cref{eq:zeta}.}

\label{Comparison}
\centering
\begin{small}
\begin{sc}
\begin{tabular}{lccccc}
\toprule
Method  & \thead{Regret\\ Upper Bound} & \thead{H\"older\\ Continuity}& \thead{Lipschitz\\ Continuity} & \thead{2nd Order\\ Smoothness} \\
\midrule
\citet{fan2021policy}    & $\widetilde{O}((dT)^{\frac{2m+1}{4m-1}} )$ & $\times$ &  $\times$ & $\surd$  \\
\citet{luo2022contextual}  & $\widetilde{O}(T^{3/4} d)$ & $\times$ & $\surd$ & $\times$ \\
% \citet{xu2022towards}  & $\widetilde{O}(T^{\frac{3}{4}} +\sqrt{d} T^{\frac{2}{3}})$ & $-$ & $-$ & $-$\\
\citet{tullii2024improved}  & $\widetilde{O}((dT)^{2/3})$ & $\times$ & $\surd$ & $\times$ \\
This Work & $\widetilde{O}(T^{\nu(\alpha)}d^{\nicefrac{\alpha}{2+\alpha}})^{\mathdollar}$ & $\surd$ & $\surd$ & $\times$ & \\
\bottomrule
% Lower Bound &  & \text{not established yet} & $\Omega(T^{2/3})$ & \text{not established yet} & \\
% \bottomrule
\end{tabular}
\end{sc}
\end{small}
\end{table*}

\subsection{Notation}
For an interval $I=(a,b)$, $a,b \in \R$ we use $|I|=b-a$. For any given matrix $\Sigma \in \mathbb{R}^{d_1 \times d_2}$, we write $\Sigma \succcurlyeq 0$ or $\Sigma \preccurlyeq 0$ if $\Sigma$ or $-\Sigma$ is semi-definite. For any event $A$, we let $\mathbb{I}(A)$ be an indicator random variable which is equal to $1$ if $A$ is true and $0$ otherwise. For two positive sequences $\left\{a_n\right\}_{n \geq 1},\left\{b_n\right\}_{n \geq 1}$, we write $a_n=\mathcal{O}\left(b_n\right)$ or $a_n \lesssim b_n$ if there exists a positive constant $C$ such that $a_n \leq C  b_n$. We denote $X_n 
 =\mathcal{O}_P(a_n)$ if there exists a $\eta>0$ and $n_0$ such that $P(X_n\leq a_n)\geq 1-1/n^{\eta}$ for $n\geq n_0$. In addition, we write $a_n=\Omega\left(b_n\right)$ or $a_n \gtrsim b_n$ if $a_n / b_n \geq c$ with some constant $c>0$. Moreover, we let $\widetilde{\mathcal{O}}(\cdot), \widetilde{\Omega}(\cdot)$ represent the same meaning with $\mathcal{O}(\cdot), \Omega(\cdot)$ except for ignoring log factors. For a random variable $x$ we will denote by $f_x$, and $P_x$ its corresponding density function and probability measure, respectively. For a c.d.f. $F$ we will use $S$ to denote $1-F$. Given a function $h(x,y)$ we write $\mathbb{E}_x[h(x,y)] = \int h(x,y) dP_x(x)$. We say that a function $S$ is $\alpha$-H\"older (continuous) for some constant $\alpha\in(0,1]$ if $|S(u)-S(v)|\leq L |u-v|^{\alpha}$ for all $u,v$ in it's domain.

\section{Problem Setting}\label{sec:problem_setting}
We consider the pricing problem where a seller has a single product for sale at each time period $t=1,2, \cdots, T$. Here $T$ is the total number of periods (i.e. length of the horizon) and may be unknown to the seller. The \textit{market value of the product} at time $t$ is denoted by $v_{t}$ and is unknown to the seller. At each period $t$, the seller posts a \textit{price} $p_{t} \in [p_{\min},p_{\max}]$ for $0\leq p_{\min} < p_{\max}< \infty$. If $p_{t} \leq v_{t}$, a sale occurs, and the seller collects a revenue of $p_{t}$. Otherwise, no sale occurs and no revenue is obtained. Let $y_{t}$ be the response variable that indicates whether a sale has occurred at period $t$:
\begin{align*}
\textstyle y_{t}= \mathbf{1}\{v_{t} \geq p_{t}\},
\end{align*}
\noindent
and let $p_ty_{t}$ the collected revenue at time $t$. We model the market value $v_{t}$ as a linear function of the product's observable i.i.d features $x_{t} \in \mathcal{X} \subset \mathbb{R}^{d}$ 
\begin{equation}\label{def:v_t}
\textstyle v_{t}=\theta_{0}^{\top} x_{t}+z_{t},
\end{equation}
where $\theta_{0}\in \mathbb{R}^{d}$ is an unknown parameter (which includes the intercept term), and $z_{t}$ are i.i.d sequence of idiosyncratic noise drawn from an unknown distribution $F_0$ with mean $0$ and bounded support 
\begin{equation}\label{eq:domain_of_S_0}
\textstyle \mathcal{U}\triangleq(\inf\{z \in \mathbb{R}:F_0(z)>0\}, \sup\{z \in \mathbb{R}:F_0(z)<1\}).
\end{equation}
We assume that the first entry in $x_t$ equals $1$ to account for the intercept term in $\theta_0$. The overall procedure is summarized in Box \ref{box:contextual_pricing}. The expected revenue for any offered price $p$ given $x_t$ is 
\begin{align*}
r_t(p)&\triangleq\mathbb{E}_{z_t}\left(p \mathbf{1}\{v_t>p\} \mid x_t\right)=p P_{z_t}\left(v_t>p \mid x_t\right) = pS_0(p-\theta_0^{\top}x_t), \quad S_0 = 1-F_0.
\end{align*}
Note that, since $S_0$ is a survival function, it is non-increasing. The optimal price $p_t^*$ at time $t$ is defined by a maximizer of the expected revenue function at the round,
\begin{align}\label{eq:ots_population_p}
\textstyle p_t^* \in \operatorname{argmax}_{p\in [p_{\min},p_{\max}]}\, p S_0\left(p - \theta_0^{\top}x_t\right).
\end{align}
Note that $p_t^*=p_t^*\left(x_t\right)$, depends on $x_t$. The regret at step $t$ is defined by the difference between the expected revenues from the optimal price $p_t^*$ and the offered price $p_t$: $r_t(p_{t}^{*})-r_t(p_{t})$. In other words, we consider the problem of maximizing revenue as minimizing the following maximum regret
\begin{align*}
\textstyle R(T)\triangleq \mathbb{E}\left[\sum_{t=1}^T p_t^*\mathbf{1}\{p^*_t\leq v_t\} - p_t\mathbf{1}\{p_t\leq v_t\} \right],
\end{align*}
where the expectation is taken with respect to the idiosyncratic noise $z_t$, the covariates $x_t$, and the offered prices $p_t$ that depend on the specific policy.

\begin{numberedbox}[Contextual Pricing Dynamic] \label{box:contextual_pricing}
For each sales round $t=1, \ldots, T$:
\begin{enumerate}[leftmargin=*, label=(\arabic*)]
\setlength\itemsep{0.5pt}
\item The seller observes a context vector $x_t \in \mathbb{R}^d$.
\item The seller offers a price $p_t$ based on $x_t$ and the previous sales records $\left\{\left(x_\tau, p_\tau, y_\tau\right)\right\}_{\tau=1}^{t-1}$.
\item Simultaneously, the customer evaluates the product at $v_t$, which is not known to the seller.
\item The seller observes $y_t=\mathbf{1}\{v_t \geq p_t\}$, indicating whether the product was sold.
\end{enumerate}
\end{numberedbox}

As the firm's goal is to design a policy that sets prices $p_t$ as close as possible to the optimal prices $p_t^*$ defined in \cref{eq:ots_population_p}, we first estimate $(\theta_0,S_0)$ and then we plug in the estimate as in \cref{eq:opt_p} to get an estimated optimal price $p_t$. Accurate estimation of $(\theta_0,S_0)$ thereby ensures that the resulting policy incurs low regret.

\section{Proposed Algorithm}

We employ an epoch-based design (also known as the doubling trick) that segments the given horizon $T$ into several clusters of rounds, called \emph{epochs} or \emph{episodes}, and executes identical pricing policies on a per-epoch basis. Let $\mathcal{J}_1 = \{0,1,\dots, \tau_1-1\}$ be the first episode, where $\tau_1$ is a prefixed constant. For $k=2,\dots , K = \left\lceil\log \left(T / \tau_1\right)+1\right\rceil$, define $\tau_k = \tau_1 2^{k-1}$, and $\mathcal{J}_k = \{\tau_k-\tau_1,\dots,\tau_{k+1}-\tau_1-1\}$ the set of times in the $k$-th episode. 

We partition $\mathcal{J}_k$ into two sub-phases, $\mathcal{J}_k = E_k \cup E_k'$, where $E_k$ represents the \emph{exploration phase}, dedicated to collecting data for estimating the parameters $(\theta_0,S_0)$, while $E_k'$ denotes the \emph{exploitation phase}, during which we apply the optimal prices based on the estimated parameters $(\widehat{\theta}_k,\widehat{S}_k)$. The length of the exploration phase, $|E_k|$, is set to $\lceil d^{\nicefrac{\alpha}{2+\alpha}}\tau_{k}^{\nu(\alpha)}\rceil$, chosen to minimize the expected regret $R(T)$. Specifically, as we show in the proof of \cref{thm:regret_upper_bound}, if $|E_k|= d^\xi\tau_k^\eta$ for some $\xi,\eta \in (0,1)$, then $R(T)$ is minimized if $\xi$ and $\eta$ satisfy the condition $d^{\xi}\tau_k^\eta = \lceil d^{\nicefrac{\alpha}{2+\alpha}}\tau_{k}^{\nu(\alpha)}\rceil$. $E_k$ is further divided into two equal-sized intervals $I_k$ and $\widetilde{I}_k$. In $I_k$ we collect data to estimate $\theta_0$. In $\widetilde{I}_k$ we collect data to estimate $S_0$. The details are stated in \cref{algo:semiparametric}, and a picture of a general episode $\mathcal{J}_k$ is shown in \cref{fig:explore-exploit}. In the following portion of this section, we examine the details of exploration-exploitation for a fixed episode $k$.

\begin{algorithm}[tb]
\caption{Semi-Parametric Pricing}
\label{algo:semiparametric}
\begin{algorithmic} 
\State {\bf Input:} Time horizon $T$ and length of the first epoch, $\tau_1$; the H\"older exponent $\alpha$ of $S_0$ and the corresponding $\nu (\alpha)$ defined in \cref{eq:zeta}; the minimum and maximum of price search range, $p_{\min }$ and $p_{\max }$, $H = p_{\max}-p_{\min}$; $\mathcal{U}$ defined in \cref{eq:domain_of_S_0}.
\State Set $K =\left\lceil\log \left(T / \tau_1\right)+1\right\rceil$.
    \For{epoch $k=1,2,\dots, K$}
        \State $\tau_{k} \gets \tau_1 2^{k-1}$, length of episode $k$
        \State $a_{k} \gets\lceil d^{\nicefrac{\alpha}{2+\alpha}}\tau_{k}^{\nu(\alpha)}/2\rceil$, length of exploration phase
    \State $I_k \gets \{\tau_k-\tau_1,\dots , \tau_k -\tau_1 + a_{k}-1\}$
    \State $\widetilde{I}_k  \gets \{\tau_k -\tau_1 + a_{k},\dots ,\tau_k -\tau_1 + 2a_{k}-1\}$
    \State $E_k  \gets I_k \cup \widetilde{I}_k$ indexes of the exploration phase
    \State $E'_k  \gets \{\tau_k-\tau_1+2a_k,\dots ,\tau_{k+1}-\tau_1-1\}$ indexes of the exploitation phase.
    \For{$t \in I_k$}
        \State Observe $x_t$
        \State Set $p_t \sim \text{unif}(p_{\min},p_{\max})$.
        \State  Get $y_t \gets \mathbf{1}\{p_t \leq v_t\}$
    \EndFor
    \State $\widehat{\theta}_k \gets \text{OLS}\{(x_t,Hy_t)\}_{t \in I_k}$
    \begin{tcolorbox}[colframe=black, colback=softblue, boxrule=0.2mm, arc=0mm, 
left=0mm, right=0mm, top=0mm, bottom=0mm, coltitle=black]
    \For{$t \in \widetilde{I}_k$}
        \State Observe $x_t$
        \State Sample $w_t \sim \text{unif}(\mathcal{U})$
        \State Set $p_t \leftarrow w_t + \widehat{\theta}_k x_t$
        \State  Get $y_t \gets \mathbf{1}\{p_t \leq v_t\}$
    \EndFor
    \end{tcolorbox}
    \State $\widehat{S}_k \gets \text{Antitonic}\{(w_t,y_t)\}_{t \in \widetilde{I}_k}$
    \For{$t \in E_k'$}
        \State Observe $x_t$
        \State Set price $p_t$ as defined in \cref{eq:opt_p}.
        \State  Get $y_t \gets \mathbf{1}\{p_t \leq v_t\}$
    \EndFor
\EndFor
\end{algorithmic}
\end{algorithm}
\paragraph{Estimation of $\theta_0$.}
For all $t \in I_k$ the seller observe $x_t$, deploy $p_t \sim \operatorname{unif}(p_{\min},p_{\max})$ and observes $y_t=\mathbf{1}\{p_t\leq v_t\}$. Let $H= p_{\max}-p_{\min}$ and estimate
\begin{align*}
\textstyle\widehat{\theta}_k=\operatorname{OLS}\{(x_t,Hy_t)\}_{t\in I_k} = \arg\min_{\theta} \frac{1}{|I_k|}\sum_{t \in I_k} (Hy_t-\theta^{\top}x_t)^2.
\end{align*}

\paragraph{Estimation of $S_0$.}
For $t \in \widetilde{I}_k$, the seller observe $x_t$, sample $w_t \sim\operatorname{unif}(\mathcal{U})$, propose a price $p_t = w_t+\widehat{\theta}_k^{\top}x_t$ and observes $y_t=\mathbf{1}\{p_t\leq v_t\}$. Estimate
\begin{align*}
\textstyle\widehat{S}_k = \operatorname{Antitonic}\{(w_t,y_t)\}_{t \in \widetilde{I}_k} = \arg \min_{S \in \mathcal{S}}\sum_{t \in \widetilde{I}_k}(y_t-S(w_t))^2,
\end{align*}
where $\mathcal{S}$ is the set of non-increasing function in $\mathbb{R}$.

\paragraph{Exploitation.}
For every $t \in E'_k$, observe $x_t$, set
\begin{equation}\label{eq:opt_p}
p_t \in \operatorname{argmax}_{p \in [p_{\min},p_{\max}]}p\widehat{S}_{k}(p-\widehat{\theta}_{k}^{\top}x_t),
\end{equation}
and get reward $p_t\mathbf{1}\{p_t \leq v_t\}$.

\begin{figure}[h]
    \centering
    \vspace{.05in}
    \includegraphics[width=0.6\textwidth]{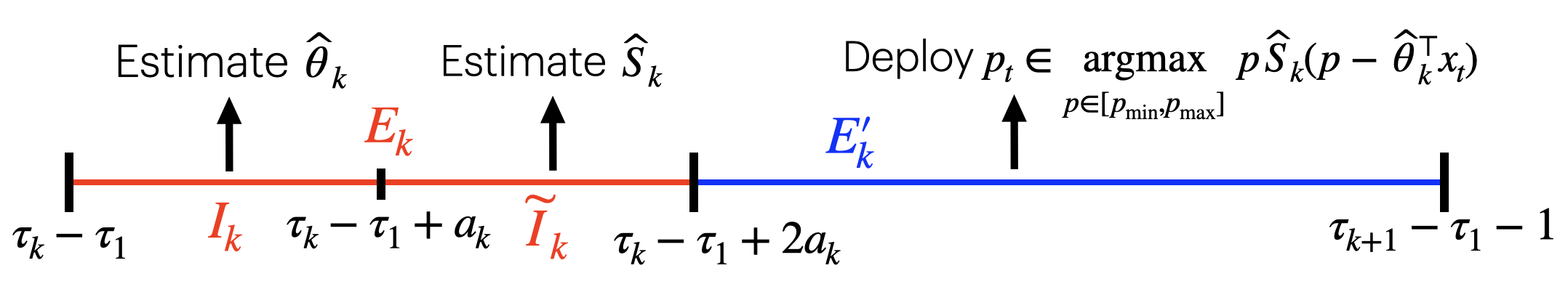} % Adjust the path and size as needed
    \caption{Picture of a general episode $\mathcal{J}_k$, $k=1,2,\dots, K$.}
    \vspace{.05in}
    \label{fig:explore-exploit}
\end{figure}

\subsection{Complexity of the antitonic regression}\label{sec:complexity}
The algorithmic complexity for estimating $S_0$ is $\mathcal{O}(d^{\nicefrac{\alpha}{2+\alpha}}T^{\nu(\alpha)})$. Indeed by \cite{grotzinger1984projections, tibshirani2011nearly} the computational complexity for the antitonic estimator is $\mathcal{O}(n)$, where $n$ is the sample size. In our case, the estimation of $S_0$ happens in (half of) the exploration phase which has length proportional to $d^{\nicefrac{\alpha}{2+\alpha}}\tau_{K}^{\nu(\alpha)} = d^{\nicefrac{\alpha}{2+\alpha}}2^{K\nu(\alpha)}$, then, using that $K \propto \log_2(T)$, we have $n \propto d^{\nicefrac{\alpha}{2+\alpha}}T^{\nu(\alpha)}$. 

\section{Regret Analysis}
Before proceeding with the regret analysis we need to discuss the convergence rates of $\widehat{\theta}_k$ to $\theta_0$ and $\widehat{S}_k$ to $S_0$. We present our main theorems and proofs. We defer to the Appendix for the missing proofs.

\subsection{Estimation of $\theta_0$}

\begin{assumption}[Bounded parameter space]\label{ass:1}
The parameter $\theta_0 \in \mathbb{R}^d$ is an interior point of $\Theta$ and the parameter space $\Theta$ is a compact convex set.
\end{assumption}

\begin{assumption}[Bounded i.i.d. contexts]\label{ass:bounded_iid_data}
(a) $x_t \in \mathcal{X} \subset \mathbb{R}^d$ is i.i.d. drawn a distribution that does not involve $\theta_0$ and $F_0$, and for all $x_t \in \mathcal{X}$, $\left\|x_t\right\|_2 \leq R_\mathcal{X}$ for some unknown $R_\mathcal{X}>0$. (b) There exists reals $c_{\min },c_{\max }>0$, s.t. $c_{\min } \mathbb{I}_{d} \preccurlyeq \Sigma \preccurlyeq c_{\max } \mathbb{I}_{d}$, where $\Sigma \triangleq \mathbb{E} x_t x_t^{\top}$ and $\mathbb{I}_{d}$ the $d \times d$ identity matrix.
\end{assumption}
Assumptions \ref{ass:1} and \ref{ass:bounded_iid_data} are standard in the dynamic pricing literature \cite{javanmard2019dynamic, xu2022towards, luo2022contextual, fan2021policy}. More precisely, the i.i.d. assumption of the context and the prices $\{p_t\}_{t \in I_k}$, is used in \cref{lemma:1} to derive a concentration inequality for the estimation of $\theta_0$.

\begin{lemma}\label{lemma:1}[\citet[Lemma 4.1]{fan2021policy}] Let $n_k \triangleq |I_k|$ for simplicity of notation. Under Assumptions \ref{ass:1} and \ref{ass:bounded_iid_data}, there exist $c_0,c_1>0$ depending only on absolute constants given in assumptions, such that for any episode $k$, as long as $n_{k} \geq c_0 d$, with probability at least $1- Q_{n_{k}}$, with $Q_{n_{k}}\triangleq 2 e^{-c_{1} c_{\min }^{2}n_{k} / 16}+\frac{2}{n_{k}}$, it holds that
$$
\textstyle \|\widehat{\theta}_{k}-\theta_{0}\|_2 \leq C_{\theta_0} \sqrt{\nicefrac{d\log n_{k}}{n_{k}}} \triangleq R_{n_{k}},
$$
where $C_{\theta_0}\triangleq \frac{8 \max \left\{R_{\mathcal{X}}, 1\right\}\left(R_{\mathcal{X}} R_{\theta_0}+p_{\max}-p_{\min}\right)}{c_{\min}}$.
\end{lemma}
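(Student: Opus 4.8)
The plan is to recognize $(x_t, Hy_t)_{t\in I_k}$ as a well-specified bounded-noise linear regression and then invoke standard finite-sample OLS concentration.

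First I would identify the population regression function. Conditioning on $(x_t,z_t)$ and integrating over the uniform price $p_t\sim\operatorname{unif}(p_{\min},p_{\max})$ gives $\mathbb{E}[Hy_t\mid x_t,z_t]=H\,P(p_t\le v_t)=v_t-p_{\min}$ whenever $v_t=\theta_0^{\top}x_t+z_t$ lies in $[p_{\min},p_{\max}]$, which holds under the bounded-support assumption on $z_t$ together with the boundedness of $\theta_0^{\top}x_t$. Taking expectation over the mean-zero noise $z_t$ yields $\mathbb{E}[Hy_t\mid x_t]=\theta_0^{\top}x_t-p_{\min}$; since the first coordinate of $x_t$ equals $1$, the constant $-p_{\min}$ is absorbed into the intercept, so $(x_t,Hy_t)$ obeys a linear model with slope $\theta_0$. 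Writing $\xi_t\triangleq Hy_t-\theta_0^{\top}x_t$, the residuals are conditionally mean-zero and uniformly bounded, $|\xi_t|\le H+R_{\mathcal{X}}R_{\theta_0}$ with $R_{\theta_0}\triangleq\sup_{\theta\in\Theta}\|\theta\|_2$, and --- crucially --- the vectors $\{x_t\xi_t\}_{t\in I_k}$ are i.i.d.\ with mean zero by the independence of $x_t$, $z_t$, and $p_t$.

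Next I would use the closed form of the estimator. With $\widehat{\Sigma}_k\triangleq\frac{1}{n_k}\sum_{t\in I_k}x_tx_t^{\top}$ and $g_k\triangleq\frac{1}{n_k}\sum_{t\in I_k}x_t\xi_t$, the normal equations give the exact decomposition $\widehat{\theta}_k-\theta_0=\widehat{\Sigma}_k^{-1}g_k$, hence $\|\widehat{\theta}_k-\theta_0\|_2\le\|\widehat{\Sigma}_k^{-1}\|_{\mathrm{op}}\,\|g_k\|_2$. It therefore suffices to control two quantities on a high-probability event: (i) a lower bound on $\lambda_{\min}(\widehat{\Sigma}_k)$, and (ii) an upper bound on $\|g_k\|_2$. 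For (i), since $\mathbb{E}\widehat{\Sigma}_k=\Sigma\succcurlyeq c_{\min}\mathbb{I}_d$ by \cref{ass:bounded_iid_data} and each summand $x_tx_t^{\top}$ is PSD with $\|x_tx_t^{\top}\|_{\mathrm{op}}\le R_{\mathcal{X}}^2$, a matrix Chernoff/Bernstein bound yields $\lambda_{\min}(\widehat{\Sigma}_k)\ge c_{\min}/2$, so $\|\widehat{\Sigma}_k^{-1}\|_{\mathrm{op}}\le 2/c_{\min}$, with failure probability of the exponential form $2e^{-c_1 c_{\min}^2 n_k/16}$ once $n_k\ge c_0 d$. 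For (ii), since the $x_t\xi_t$ are i.i.d., mean-zero, and bounded by $B\triangleq R_{\mathcal{X}}(H+R_{\mathcal{X}}R_{\theta_0})$, a vector Bernstein/Hoeffding inequality --- or a covering-number argument over the unit sphere in $\mathbb{R}^d$ --- controls $\|g_k\|_2$; calibrating the deviation level so that the polynomial tail is of order $1/n_k$ gives $\|g_k\|_2\le C'\sqrt{d\log n_k/n_k}$ on an event of probability at least $1-2/n_k$. Intersecting the two events (a union bound yields probability $1-Q_{n_k}$) and multiplying the bounds produces $\|\widehat{\theta}_k-\theta_0\|_2\le \frac{2C'}{c_{\min}}\sqrt{d\log n_k/n_k}$, which matches $R_{n_k}$ after identifying the constant with $C_{\theta_0}$.

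The main obstacle I anticipate is step (ii): securing the sharp $\sqrt{d}$ dimensional dependence together with the specific $1/n_k$ polynomial tail --- rather than a cruder $\sqrt{d\log d}$ rate or a purely exponential tail --- requires carefully calibrating the net/deviation level and tracking how the $\log n_k$ factor emerges. A secondary subtlety lies in the linearization of the first step: the identity $\mathbb{E}[Hy_t\mid x_t,z_t]=v_t-p_{\min}$ is valid only when $v_t$ does not fall outside $[p_{\min},p_{\max}]$, so one must ensure the support of $z_t$ and the chosen price range are compatible; otherwise a truncation bias appears and the residuals $\xi_t$ cease to be conditionally mean-zero, breaking consistency.
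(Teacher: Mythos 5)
The paper offers no proof of this lemma at all: it is imported verbatim from \citet[Lemma 4.1]{fan2021policy}, and your argument is essentially a reconstruction of that source's proof --- the same exact decomposition $\widehat{\theta}_k-\theta_0=\widehat{\Sigma}_k^{-1}g_k$, a matrix concentration bound giving $\lambda_{\min}(\widehat{\Sigma}_k)\ge c_{\min}/2$ with the exponential failure term $2e^{-c_1c_{\min}^2n_k/16}$ once $n_k\gtrsim d$, and a coordinatewise Hoeffding plus union bound on $\|g_k\|_2$ calibrated to the $2/n_k$ polynomial tail (your worry about the sharp $\sqrt{d\log n_k}$ dependence is unfounded: a union bound over $d$ coordinates costs $\log(dn_k)\le 2\log n_k$ since $n_k\ge c_0 d$). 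So the route is the right one and matches the cited proof in structure and in all constants.

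There is, however, one step stated in a way that contradicts the lemma's conclusion. You compute $\mathbb{E}[Hy_t\mid x_t]=\theta_0^{\top}x_t-p_{\min}$ and then say the constant $-p_{\min}$ is ``absorbed into the intercept, so $(x_t,Hy_t)$ obeys a linear model with slope $\theta_0$.'' But in this paper $\theta_0$ \emph{includes} the intercept (the first coordinate of $x_t$ is $1$), so absorption changes the estimand: by your own computation the OLS population target is $\theta_0-p_{\min}e_1$, with $e_1$ the first standard basis vector, and hence $\|\widehat{\theta}_k-\theta_0\|_2$ converges to $p_{\min}$, not $0$, whenever $p_{\min}>0$; the residual $\xi_t=Hy_t-\theta_0^{\top}x_t$ you work with is then \emph{not} conditionally mean-zero and $\mathbb{E}[x_t\xi_t]=-p_{\min}\,\mathbb{E}[x_t]\neq 0$. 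The bound as stated therefore requires either $p_{\min}=0$ --- which is exactly the setting of \citet{fan2021policy}, whose exploration prices are uniform on $[0,B]$, and of the paper's simulations --- or an explicit post-hoc correction $\widehat{\theta}_k\mapsto\widehat{\theta}_k+p_{\min}e_1$. This is a defect the paper itself inherits by citing the lemma while letting prices be uniform on $[p_{\min},p_{\max}]$ with $p_{\min}\ge 0$ arbitrary, but a complete proof should make the restriction (or the correction) explicit rather than pass over it. Your closing caveat --- that the identity $\mathbb{E}[Hy_t\mid x_t,z_t]=v_t-p_{\min}$ needs $v_t\in[p_{\min},p_{\max}]$ almost surely, lest a truncation bias appear --- is correct, and is likewise an assumption made explicitly in \citet{fan2021policy} (there, $v_t\in[0,B]$) that the present paper leaves implicit; \cref{ass:1} and \cref{ass:bounded_iid_data} alone do not guarantee it.
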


\subsection{Estimation of $S_0$ via antitonic regression}\label{sec:CSLR}

In this section, we provide a uniform convergence result of $\widehat{S}_k$ to $S_0$. For simplicity of notation, we re-index $\widetilde{I}_k$ as $\mathcal{T}=\{1,2,\dots, n\}$, and we denote $\widehat{\theta}_k$ by $\theta$, which was estimated using data $\{(p_t,x_t,y_t)\}_{t \in I_k}$ independent of $\mathcal{T}$. In this section, all the results must be considered as conditioned on $\theta = \widehat{\theta}_k$. We report in Box \ref{box:exploration_S_k} a more detailed explanation of the estimation of $S_0$ during the exploration phase in $\mathcal{T}$ as expounded in the box highlighted in \cref{algo:semiparametric}.

\begin{numberedbox}[Sample collection for estimating $S_{\theta}$] \label{box:exploration_S_k}
For each $t \in \mathcal{T}$ do:
\begin{enumerate}[leftmargin=*, label=(\arabic*)]
\setlength\itemsep{0.4pt}
\item Observe $x_t$.
\item The customer samples $z_t$ and evaluate $v_t = \theta_0^{\top}x_t+z_t$, unknown to the firm.
\item The firm samples $w_t \sim \operatorname{unif}(\mathcal{U})$ independent of everything else and defines $p_t = w_t + \theta^\top x_t$.
\item Observe $y_t = \mathbf{1}\{p_t \leq v_t\}$.
\end{enumerate}
\end{numberedbox}

This produces a set of data points $\{(p_t,x_t,y_t)\}_{t\in \mathcal{T}}$ that we are going to use for estimating $S_0$. To estimate $S_0$ we would need to know $\theta_0$ in advance, indeed remember that $\mathbb{E}(y_t \mid p_t,x_t)= S_0(p_t-\theta_0^{\top}x_t)$, which design points $\{p_t-\theta_0^{\top}x_t\}_{t \in \mathcal{T}}$ depend on $\theta_0$. However, our knowledge is limited to an approximation $\theta$ of $\theta_0$, and the observable design points are $p_t-\theta^{\top}x_t = w_t$.
This implies that we are only able to estimate $S_{\theta}(u) \triangleq \mathbb{E}(y_t \mid w_t=u)$. We then estimate $S_{\theta}(\cdot)$ considering $y_t$ as coming from a sample in the ordinary current status model, where the data has the form $\left(w_t,y_t\right) \overset{i.i.d}{\sim} (w,y)$, the observation times have uniform density $f_{w}$ and where $y_t=1$ with probability $S_\theta\left(w_t\right)$ at observation $w_t$.

\begin{remark}[{\bf The choice of the design points $w_t$}]\label{remark:optimal_design}
The choice of the distribution for $w_t$ is motivated by the fact that when the density of design points is uniform, we obtain convergence guarantees for the estimator of $S_0$. Specifically, as mentioned by \cite{mosching2020monotone}, if the density of the design points is bounded away from zero -- which holds for the uniform distribution -- then $\{w_t\}_{t \in \mathcal{T}}$ are ``asymptotically dense'' within any interval contained in $\mathcal{U}$ (defined in \cref{eq:domain_of_S_0}). Ensuring that the design points have a density bounded away from zero is a sufficient condition for the convergence result in \cref{thm:Dumbgen} (see \cref{lemma:A_n} for further details). However, this choice is not restrictive; any distribution whose density is bounded away from zero in $\mathcal{U}$ would still satisfy the convergence result. The only consequence of using a non-uniform density is that the regret $R(T)$ will depend on the multiplicative constant $C_2 = \inf_{u \in \mathcal{U}} f_w(u)$, which may be different from $1/|\mathcal{U}|$ if $f_w$ is not the uniform density in $\mathcal{U}$. Furthermore, since each episode $\mathcal{J}_k$ is independent of any other episode $\mathcal{J}_{k'}$ for $k \neq k'$, it is possible to select a different density $f_w^{(k)}$ for each episode $k$, provided that it remains bounded away from zero in $\mathcal{U}$. An interesting extension of our work would be to adaptively update the design density $f_w^{(k)}$ based on the previous design $f_w^{(k-1)}$ in an \emph{optimal} manner, namely that $f_w^{(k)}$ converges to the optimal design density as $k \rightarrow \infty$, that is the density that minimizes the integrated mean square error. This approach, known as \emph{sequential optimal design}, has been extensively studied in the literature (see, e.g., \citet{muller1984optimal, zhao2012sequential,bracale2024learning}). A key advantage of an optimal design algorithm is that it dynamically allocates more data to regions where the estimation of $S_0$ is less accurate, thereby progressively improving its precision. However, it is important to note that while this adaptive approach can optimize the multiplicative constant in the regret bound, it does not affect the rate of the regret itself.
\end{remark}

\begin{remark}[{\bf The difference between the conditional distributions of $y|(p,x)$ and $y|w$}]
We want to highlight the difference between the conditional distributions of $y|(p,x)$ and $y|w$. The first is independent of $\theta$, indeed we have that
$$
\mathbb{E}_z[y|p,x]=\mathbb{E}[\mathbf{1}\{p\leq \theta_0^{\top}x+z\}|p,x]=S_0(p-\theta_0^{\top}x),
$$
while, the distribution of $y|w$ depends on $\theta$ because $y = \mathbf{1}\{p\leq v\}$ where $p=w+\theta^{\top}x$, and, since data $(w,y)$ is generated as in Box \ref{box:exploration_S_k}, we have that
\begin{align}\label{eq:expected_y_t_theta}
S_{\theta}(u)&= \mathbb{E}_{z}(y \mid w=u)=\mathbb{E}_{(p,x)}(\mathbb{E}_{z}(y \mid p,x) \mid w=u)\nonumber\\
&=\mathbb{E}_{(p,x)}(S_0(p-\theta_0^{\top} x) \mid w=u)\nonumber\\
&=\mathbb{E}_{(p,x)}(S_0(p-\theta^{\top} x +\theta^{\top} x-\theta_0^{\top} x) \mid w=u)\nonumber\\
&=\mathbb{E}_{x}(S_0(u+(\theta-\theta_0)^{\top}x) \mid w=u)\nonumber\\
&= \textstyle\int S_{0}(u +(\theta -\theta_{0})^{\top}x)dP_{x \mid w=u}(x)\nonumber\\
&= \textstyle\int S_{0}(u +(\theta -\theta_{0})^{\top}x)dP_x(x),
\end{align}
where the first equality is by definition, in the second we use the tower property and in the last equality, we use that $w_t$ is sampled independently of $x_t$. Note from \cref{eq:expected_y_t_theta} that $S_{\theta}(\cdot)$ is non-increasing for all $\theta$ because $S_0=1-F_0$, being a survival function, is non-increasing.
\end{remark}

\begin{proposition}\label{proposition:S_theta_properties}
$S_{\theta}$ is non-increasing for every $\theta\in \mathbb{R}^d$. Moreover, if $S_0$ is $\alpha$-H\"older with $\alpha \in (0,1]$, then $S_{\theta}$ is $\alpha$-H\"older uniformly in $\theta \in \mathbb{R}^d$ and $|S_{\theta}(u)-S_0(u)|\lesssim \|\theta-\theta_0\|^{\alpha}_2$ uniformly in $u \in \mathbb{R}$.
\end{proposition}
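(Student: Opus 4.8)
The plan is to work entirely from the integral representation $S_{\theta}(u) = \int S_0(u + (\theta-\theta_0)^{\top}x)\,dP_x(x)$ established in \cref{eq:expected_y_t_theta}; all three assertions then reduce to moving a difference inside the integral and invoking a single structural property of $S_0$. For the monotonicity claim I would fix $\theta$, take $u \le v$, and observe that since $S_0$ is non-increasing and $u + (\theta-\theta_0)^{\top}x \le v + (\theta-\theta_0)^{\top}x$ for every fixed $x$, the integrand at $u$ pointwise dominates that at $v$; integrating against $P_x$ preserves the inequality and yields $S_{\theta}(u) \ge S_{\theta}(v)$. This holds for every $\theta$ because the common shift $(\theta-\theta_0)^{\top}x$ never affects the ordering.

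For the uniform $\alpha$-H\"older property I would bound $|S_{\theta}(u)-S_{\theta}(v)|$ by $\int |S_0(u+(\theta-\theta_0)^{\top}x) - S_0(v+(\theta-\theta_0)^{\top}x)|\,dP_x(x)$ using the triangle inequality, and then apply the $\alpha$-H\"older bound of $S_0$ to the integrand. The decisive observation is that the shift $(\theta-\theta_0)^{\top}x$ is common to both arguments and cancels inside the absolute difference, leaving $L|u-v|^{\alpha}$; since this quantity does not depend on $x$, the integral collapses to $L|u-v|^{\alpha}$ with $L$ the H\"older constant of $S_0$ alone. This constant is independent of $\theta$, which is precisely the uniformity asserted.

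Finally, for $|S_{\theta}(u)-S_0(u)| \lesssim \|\theta-\theta_0\|_2^{\alpha}$ I would write $S_0(u) = \int S_0(u)\,dP_x(x)$, pass the difference inside the integral, and apply $\alpha$-H\"olderness to obtain the integrand bound $L\,|(\theta-\theta_0)^{\top}x|^{\alpha}$. Cauchy--Schwarz gives $|(\theta-\theta_0)^{\top}x| \le \|\theta-\theta_0\|_2\,\|x\|_2$, and the boundedness $\|x\|_2 \le R_{\mathcal{X}}$ from \cref{ass:bounded_iid_data} yields the bound $L\,R_{\mathcal{X}}^{\alpha}\,\|\theta-\theta_0\|_2^{\alpha}$, which is uniform in $u$. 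I do not anticipate a genuine obstacle: the argument is elementary once \cref{eq:expected_y_t_theta} is in hand. The only point demanding mild care is confirming that the constants are uniform in $\theta$ and in $u$, but this falls out automatically, since the $\theta$-dependent shift cancels in the H\"older step and the $u$-dependence disappears once $\|x\|_2$ is replaced by the $u$-free constant $R_{\mathcal{X}}$.
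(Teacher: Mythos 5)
Your proposal is correct and matches the paper's own proof essentially step for step: both arguments start from the representation in \cref{eq:expected_y_t_theta}, deduce monotonicity pointwise under the integral, obtain the uniform $\alpha$-H\"older bound because the shift $(\theta-\theta_0)^{\top}x$ cancels in the difference, and derive the third bound via Cauchy--Schwarz together with $\|x\|_2 \leq R_{\mathcal{X}}$ from \cref{ass:bounded_iid_data}. If anything, your explicit use of the triangle inequality to move the absolute value inside the integral is slightly more careful than the paper, which writes that step as an equality.
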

 
\cref{proposition:S_theta_properties} is crucial because it tells us that we can estimate $S_{\theta}$ under the antitonic constraint and that $S_{\theta}$ is close to $S_0$ as long as $\theta$ is close to $\theta_0$, which will be used to prove \cref{thm:regret_upper_bound}. Guided by \cref{proposition:S_theta_properties}, we estimate $S_{\theta}$ using antitonic regression, denoted as
\begin{equation}\label{eq:likelihood_F}
\textstyle\widehat{S}_{\theta} \triangleq \operatorname{argmin}_{S \in \mathcal{S}} \sum_{t \in \mathcal{T}} (y_t-S(w_t))^2,
\end{equation}
where $\mathcal{S}$ is the set of non-increasing functions in $\mathbb{R}$. The minimizer $\widehat{S}_{\theta}$ is a piecewise constant function with jumps at a subset of $\{w_t: t \in \mathcal{T}\}$. The order statistics on which $\widehat{S}_{\theta}$ is based are the order statistics of the values $w_t$ and the values of the corresponding $y_t$. To be more specific, let $u_1<u_2<\dots<u_m$ the different value of the observed $\{w_t\}_{t\in \mathcal{T}}$. For $j=1,\dots,m$ set 
$$
\textstyle o_j= \#\{t: w_t=u_j\},\quad \widehat{y}_j = \frac{1}{o_j}\sum_{i:w_{i}=u_j}y_i.
$$
For every $1\leq r \leq s \leq m$ let
$$
\textstyle o_{r s}\triangleq\sum_{j=r}^s o_j=\#\left\{t: u_r \leq w_t \leq u_s\right\}, \quad \widehat{y}_{rs} =
\frac{1}{o_{r s}} \sum_{j=r}^s o_j \widehat{y}_j.
$$
It is well known that $\widehat{S}_{\theta}= (\widehat{S}_{\theta}(u_1),\widehat{S}_{\theta}(u_2),\dots,\widehat{S}_{\theta}(u_m))$ may be represented by the following minimax and maximin formulae, see \cite{robertson1988order}: for $1\leq j \leq m$
\begin{align*}
\widehat{S}_{\theta}(u_j)& = \min_{r\leq j}\max_{s\geq j} \,\widehat{y}_{rs}=\max_{s\geq j} \min_{r\leq j}\,\widehat{y}_{rs}.
\end{align*}
The $\widehat{S}_{\theta}$ is also known as the antitonic regression on data $\{\left(w_t,y_t\right)\}_{t\in \mathcal{T}}$, and we will denote it as
$$
\widehat{S}_{\theta} = \text{Antitonic} \{\left(w_t,y_t\right)\}_{t\in \mathcal{T}}.
$$
We are now prepared to demonstrate the convergence of $\widehat{S}_k$ to $S_0$. To establish this result, we require that $S_{\theta}$ is $\alpha$-H\"older for some $\alpha \in (0,1]$ uniformly in $\theta$. According to \cref{proposition:S_theta_properties}, this condition is satisfied provided we make the following assumption:

\begin{assumption}\label{ass:3-Lipschitz}
$|S_0(u)-S_0(v)|\leq C_1|u-v|^{\alpha}$ for some $\alpha \in (0,1]$, $C_1>0$ and for all $u,v \in \mathbb{R}$.
\end{assumption}

\begin{theorem}\label{thm:Dumbgen}
Let $\{(w_t,y_t)\}_{t \in \mathcal{T}}$ be as defined in Box \ref{box:exploration_S_k} and let Assumption \ref{ass:3-Lipschitz} hold. Then for every $\kappa >0$ and $\gamma>2$ there exists $n_0=n_0(\gamma,\kappa,\alpha) \in \mathbb{N}$ and $C= C(C_1,|\mathcal{U}|,\kappa,\alpha)>0$ (where $\mathcal{U}$ is defined in \cref{eq:domain_of_S_0}) such that
$$
\mathbb{P}\left\{\underset{u \in \mathcal{U}_n}{\sup} |\widehat{S}_{\theta}(u)-S_\theta(u)| \leq C \rho_n^{\alpha /(2\alpha +1)} \right\} \geq 1-\tfrac{1}{n^{\gamma -2}},\quad n \geq n_0,
$$
where $\rho_n=\log (n) / n$ and $\mathcal{U}_n = \{u \in \mathcal{U}:\left[u \pm \delta_n\right] \subset \mathcal{U}\}$, with $\delta_n=\kappa  \rho_n^{1 /(2\alpha +1)}$.
\end{theorem}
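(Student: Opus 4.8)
The plan is to condition on the design $\{w_t\}_{t\in\mathcal{T}}$. As observed below \eqref{eq:expected_y_t_theta}, given $\{w_t\}$ the $y_t$ are independent $\mathrm{Bernoulli}(S_\theta(w_t))$ (the $x_t,z_t$ are integrated out), so the data are a clean instance of the current status model with decreasing regression function $S_\theta$ that, by \cref{proposition:S_theta_properties} and \cref{ass:3-Lipschitz}, is $\alpha$-H\"older with constant $\asymp C_1$ uniformly in $\theta$. Writing $\xi_t\triangleq y_t-S_\theta(w_t)$, these are independent, mean zero, $|\xi_t|\le1$, with $\Var(\xi_t)\le\tfrac14$. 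It suffices to bound the two one-sided deviations $\sup_{u\in\mathcal{U}_n}(\widehat{S}_\theta(u)-S_\theta(u))_+$ and $\sup_{u\in\mathcal{U}_n}(S_\theta(u)-\widehat{S}_\theta(u))_+$, which are symmetric under $(y,S)\mapsto(1-y,1-S)$ together with a reflection of $w$; I describe only the upper deviation. Since $\widehat{S}_\theta$ is piecewise constant with jumps at the $w_t$ while $S_\theta$ is $\alpha$-H\"older, the supremum over $\mathcal{U}_n$ is attained, up to an additive $O(\delta_n^\alpha)$ from the modulus of continuity of $S_\theta$, on the finite set of order statistics $\{u_j\}$, so I reduce to a union bound over at most $n$ points.

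\textbf{Design regularity and a localizing anchor.} Because the $w_t$ are i.i.d.\ $\mathrm{unif}(\mathcal{U})$ their density is bounded below by $1/|\mathcal{U}|$, so I invoke \cref{lemma:A_n} to fix an event $A_n$ of probability at least $1-n^{-\gamma}$ on which every interval $I\subset\mathcal{U}$ with $|I|\ge\delta_n/2$ contains at least $\tfrac12 n|I|/|\mathcal{U}|$ design points; this is where $n\delta_n\gtrsim\log n$ enters, valid since $n\delta_n\asymp\kappa\,n^{2\alpha/(2\alpha+1)}(\log n)^{1/(2\alpha+1)}$. All that follows is on $A_n$. Fix a grid point $u=u_j\in\mathcal{U}_n$ and use $\widehat{S}_\theta(u_j)=\min_{r\le j}\max_{s\ge j}\widehat{y}_{rs}$, choosing the anchor $r_0\le j$ to be a design point in $[u-\delta_n,\,u-\delta_n/2]$, which exists on $A_n$ precisely because $u\in\mathcal{U}_n$ leaves room $\delta_n$ to the left (this is the origin of the boundary restriction). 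Then
\begin{align*}
\widehat{S}_\theta(u)-S_\theta(u)\le\max_{s\ge j}\big(\widehat{y}_{r_0s}-S_\theta(u)\big)
=\max_{s\ge j}\Big(\tfrac{1}{o_{r_0s}}\!\!\sum_{t:\,u_{r_0}\le w_t\le u_s}\!\!\xi_t+\tfrac{1}{o_{r_0s}}\!\!\sum_{t:\,u_{r_0}\le w_t\le u_s}\!\!\big(S_\theta(w_t)-S_\theta(u)\big)\Big),
\end{align*}
where monotonicity and $\alpha$-H\"olderness bound the bias term by $S_\theta(u_{r_0})-S_\theta(u)\le C_1'\delta_n^\alpha$, and the anchor forces $o_{r_0s}\ge o_{r_0j}\ge\tfrac12 n(\delta_n/2)/|\mathcal{U}|=:N\asymp n\delta_n$ for all $s\ge j$.

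\textbf{Maximal inequality and balancing.} It remains to control $\max_{s\ge j}M_s/o_{r_0s}$ with $M_s=\sum_{u_{r_0}\le w_t\le u_s}\xi_t$ and $o_{r_0s}\ge N$. Peeling over dyadic ranges $o_{r_0s}\in[2^\ell N,2^{\ell+1}N)$ and applying Bernstein's inequality to the sub-walk on each range gives $\mathbb{P}(\exists s:\ M_s>\varepsilon\,o_{r_0s})\le\sum_{\ell\ge0}\exp(-c\,\varepsilon^2 2^\ell N)\lesssim\exp(-c\,\varepsilon^2 N)$. Matching the stochastic term to the bias by taking $\varepsilon\asymp\delta_n^\alpha$ yields $\varepsilon^2 N\asymp n\delta_n^{2\alpha+1}=\kappa^{2\alpha+1}\log n$, so the per-point failure probability is $n^{-\Theta(\kappa^{2\alpha+1})}$. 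A union bound over the $\le n$ grid points, intersected with $A_n$, gives the stated $1-n^{-(\gamma-2)}$ once the constant $C=C(C_1,|\mathcal{U}|,\kappa,\alpha)$ in front of $\rho_n^{\alpha/(2\alpha+1)}=\delta_n^\alpha/\kappa^\alpha$ is taken large enough, with $n_0(\gamma,\kappa,\alpha)$ absorbing the regime where the peeling and $A_n$ estimates take effect. The lower deviation is identical after reflection, anchoring instead at $s_0$ with $u_{s_0}\in[u+\delta_n/2,\,u+\delta_n]$, which again needs room $\delta_n$ to the right and hence $u\in\mathcal{U}_n$.

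\textbf{Main obstacle.} The crux is the maximal inequality: the naive choice $r_0=j$ fails because the smallest competing intervals carry only $O(1)$ observations and unbounded normalized noise, so the $\min_r$ in the max--min formula—realized here as a localizing anchor at scale $\delta_n$—is essential to guarantee the floor $N\asymp n\delta_n$ in every interval. Upgrading the classical $\mathcal{O}_P$ uniform rate to a genuine exponential tail of the precise order $n^{-\Theta(\kappa^{2\alpha+1})}$, uniformly over the anchor and over $s$ and with constants depending only on $(C_1,|\mathcal{U}|,\kappa,\alpha)$, is the technical heart of \cref{thm:Dumbgen}.
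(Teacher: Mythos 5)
Your proof is correct and follows the same overall architecture as the paper's: reduce to conditionally independent Bernoulli responses with antitonic $\alpha$-H\"older mean $S_\theta$ (via \cref{proposition:S_theta_properties}), invoke \cref{lemma:A_n} for design regularity, anchor the min--max representation $\widehat{S}_\theta(u_j)=\min_{r\le j}\max_{s\ge j}\widehat{y}_{rs}$ at a design point roughly $\delta_n$ to the left of $u$ (which is exactly where the restriction $u\in\mathcal{U}_n$ enters, as you note), split into a bias term $\le C_1\delta_n^\alpha$ and a noise term over intervals carrying $o_{rs}\gtrsim n\delta_n$ points, and balance to get $\rho_n^{\alpha/(2\alpha+1)}$. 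The one genuine divergence is the noise-control step: the paper avoids your dyadic peeling entirely by a single Hoeffding bound plus a union over all $\binom{m+1}{2}$ pairs $(r,s)$ applied to the self-normalized statistic $M_n(\theta)=\max_{r\le s}o_{rs}^{1/2}|\widehat{y}_{rs}-\bar{S}_{rs}(\theta)|$ (\cref{lemma:Hoeffding}), giving $M_n(\theta)\le(D\log n)^{1/2}$ with probability $1-((n+1)/n^D)^2$; since $o_{r(u)s}\ge o_{r(u)j(u)}\gtrsim n\delta_n$ uniformly in $s\ge j(u)$, the normalized maximum needs no per-scale treatment, and the $\sqrt{\log n}$ cost of the crude union over $O(n^2)$ pairs is absorbed into the rate, which already carries a $\log n$ from $\rho_n$. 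Your Bernstein-plus-peeling argument per anchor, with a union over the $\le n$ grid points, reaches the same conclusion but is strictly more work for no gain here (Bernstein's variance refinement buys nothing since $\operatorname{Var}(\xi_t)$ is only bounded by a constant); on the other hand, your route makes transparent that the failure exponent scales like $\kappa^{2\alpha+1}$, so achieving an arbitrary $\gamma$ forces the constant in front of $\delta_n^\alpha$ (hence $C$) to grow with $\gamma$ --- a dependence the paper also incurs, since its $C=\sqrt{\kappa D/C_2}+C_1\kappa^\alpha$ with $D=\gamma$ chosen sufficiently large, even though the theorem statement writes $C=C(C_1,|\mathcal{U}|,\kappa,\alpha)$. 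Your closing remark slightly overstates the difficulty of the tail bound: as the paper's \cref{lemma:Hoeffding} shows, the exponential tail follows in a few lines from Hoeffding once one normalizes by $o_{rs}^{1/2}$ and union-bounds over all pairs, rather than being the technical heart.
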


\begin{remark}
Our \cref{thm:Dumbgen} parallels Theorem 3.3 in \cite{mosching2020monotone}, with the key distinction being the nature of the observed response variable. While \cite{mosching2020monotone} directly observes the response variable (which corresponds to our valuation \( v_t \)), we observe the binary indicator \( y_t = \mathbf{1}\{p_t \leq v_t\} \). This difference simplifies our proof, as it only requires establishing a concentration inequality for \( |\widehat{y}_{rs} - \bar{S}_{rs}(\theta)| \), where
\[
\textstyle \bar{S}_{rs}(\theta) \triangleq \frac{1}{o_{rs}} \sum_{j=r}^s o_j S_{\theta}(w_j).
\]
In our setting, this concentration inequality can be readily obtained using Hoeffding's inequality uniformly over \( \theta \). Specifically, as demonstrated in \cref{lemma:Hoeffding}, for any constant \( D > 1 \),
$\mathbb{P}\{M_n(\theta) \leq(D \log n)^{1 / 2}\}$ is at least $1-(\nicefrac{n+1}{n^D})^2$, where $M_n(\theta)\triangleq\max _{1 \leq r \leq s \leq m} o_{r s}^{1 / 2}|\widehat{y}_{rs}-\bar{S}_{rs}(\theta)|$.
\end{remark}

\subsection{Regret Upper Bound}
We are now ready to establish an upper bound on the expected regret for our \cref{algo:semiparametric}. 
% In contrast to \cite{fan2021policy}, which assumes the optimal price is unique, our analysis only requires that at least one optimal price exists within the interval $(p_{\min}, p_{\max})$.

% \begin{assumption}\label{ass:2}
% For any $(x, \theta) \in \mathcal{X}\times \Theta$ at least one maximizer of $p \mapsto p S_0(p-x^{\top} \theta_0)$ over $[0, \infty)$ lies inside $\left(p_{\min }, p_{\max }\right)$.
% \end{assumption}

\begin{theorem}\label{thm:regret_upper_bound}
Suppose that Assumptions \ref{ass:1}, \ref{ass:bounded_iid_data} and \ref{ass:3-Lipschitz}. For sufficiently large $T$ the cumulative regret of \cref{algo:semiparametric}, $R(T)$ has upper bound of order
$$
\begin{cases}
\mathcal{O}(T^{\tfrac{2}{2+\alpha}}d^{\tfrac{\alpha}{\alpha + 2}}\log^{\tfrac{\alpha}{2\alpha +1}}(dT)), \quad & \alpha \in (0,1/2),\\
\mathcal{O}(T^{\tfrac{2\alpha +1}{3\alpha +1}}d^{\tfrac{\alpha}{\alpha + 2}}\log^{\tfrac{\alpha}{2}}(dT)), \quad & \alpha \in [1/2,1].
\end{cases}
$$
\end{theorem}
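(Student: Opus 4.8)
The plan is to bound $R(T)$ episode by episode, decomposing $R(T)=\sum_{k=1}^{K}\big(R_k^{\mathrm{expl}}+R_k^{\mathrm{exploit}}\big)$, where $R_k^{\mathrm{expl}}$ collects the regret incurred on the exploration windows $I_k\cup\widetilde I_k$ and $R_k^{\mathrm{exploit}}$ that incurred on $E_k'$. Since each per-round regret $r_t(p_t^*)-r_t(p_t)$ is at most $p_{\max}$, the exploration regret is controlled crudely by the window length, $R_k^{\mathrm{expl}}\le 2p_{\max}a_k$ with $a_k=\lceil d^{\nicefrac{\alpha}{2+\alpha}}\tau_k^{\nu(\alpha)}/2\rceil$. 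Because $\tau_k=\tau_1 2^{k-1}$ doubles and $\nu(\alpha)<1$, the geometric sum $\sum_k a_k$ is dominated by its last term, giving $\sum_k R_k^{\mathrm{expl}}\lesssim d^{\nicefrac{\alpha}{2+\alpha}}\tau_K^{\nu(\alpha)}\asymp d^{\nicefrac{\alpha}{2+\alpha}}T^{\nu(\alpha)}$. This already furnishes the leading polynomial order of the claimed bound, so the remaining task is to show the exploitation regret is of the same order up to logarithmic factors.

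For a fixed episode I would condition on the data of $I_k$ (so $\widehat\theta_k$ is fixed) and work on the good event $\mathcal G_k$ on which both \cref{lemma:1} and \cref{thm:Dumbgen} hold. For $t\in E_k'$ the deployed price maximizes $p\mapsto p\,\widehat S_k(p-\widehat\theta_k^\top x_t)$, so the usual comparison of the maximizer of the estimated objective against that of the true objective gives $r_t(p_t^*)-r_t(p_t)\le 2\sup_{p}|p S_0(p-\theta_0^\top x_t)-p\widehat S_k(p-\widehat\theta_k^\top x_t)|\le 2p_{\max}\sup_{p}|S_0(p-\theta_0^\top x_t)-\widehat S_k(p-\widehat\theta_k^\top x_t)|$. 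Writing $u=p-\widehat\theta_k^\top x_t$ and inserting $S_{\widehat\theta_k}$, I split this into $|S_0(u+(\widehat\theta_k-\theta_0)^\top x_t)-S_{\widehat\theta_k}(u)|+|S_{\widehat\theta_k}(u)-\widehat S_k(u)|$. The first piece is $\lesssim\|\widehat\theta_k-\theta_0\|_2^\alpha$ via the triangle inequality through $S_0(u)$, using the $\alpha$-Hölder continuity of $S_0$ and \cref{proposition:S_theta_properties}, and by \cref{lemma:1} this is $\lesssim(d\log a_k/a_k)^{\alpha/2}$; the second piece is controlled by \cref{thm:Dumbgen} by $\lesssim(\log a_k/a_k)^{\alpha/(2\alpha+1)}$. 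Multiplying by $|E_k'|\asymp\tau_k$ yields the per-episode exploitation bound.

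It then remains to sum over episodes and substitute $a_k\asymp d^{\nicefrac{\alpha}{2+\alpha}}\tau_k^{\nu(\alpha)}$. The two exploitation rates, multiplied by $\tau_k$, become $\tau_k(d\log a_k/a_k)^{\alpha/2}$ and $\tau_k(\log a_k/a_k)^{\alpha/(2\alpha+1)}$, and a direct computation shows that $\nu(\alpha)$ is exactly the exponent that equalizes, in its power of $\tau_k$, the exploration term $a_k$ with the binding (larger) of these two exploitation terms, while the $d$-exponent $\nicefrac{\alpha}{2+\alpha}$ equalizes the exploration term with the parametric error in its power of $d$. Comparing the $a_k$-exponents $\alpha/2$ and $\alpha/(2\alpha+1)$ shows that one of the two errors—the parametric ($\theta$) error or the nonparametric ($S$) error—is binding according to whether $\alpha$ lies below or above $1/2$, and this dichotomy at $\alpha=1/2$ is exactly what produces the two cases of the statement together with their respective logarithmic factors; each geometric sum is again dominated by its last ($k=K$) term, so that $\tau_K\asymp T$ and $\log a_k\lesssim\log(dT)$.

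Finally I would discharge the probabilistic and boundary technicalities. The contribution of the complementary events is bounded by $\sum_k|E_k'|\,(Q_{a_k}+a_k^{-(\gamma-2)})\lesssim\sum_k\tau_k(2/a_k+e^{-c a_k}+a_k^{-(\gamma-2)})$, which is of lower order than $T^{\nu(\alpha)}$ once $\gamma$ is taken large enough, using $\nu(\alpha)>1/2$; the finitely many initial episodes in which $a_k<c_0 d$ or $a_k<n_0$ contribute only an additive constant, absorbed for large $T$. The genuinely delicate step is the reduction of the price-regret to the uniform estimation error: \cref{thm:Dumbgen} controls $\widehat S_k$ only on the shrinking interior $\mathcal U_n$, so I must verify that the relevant arguments $p_t^*-\theta_0^\top x_t$ and $p_t-\widehat\theta_k^\top x_t$ fall in $\mathcal U_n$, or else treat the boundary strip $\mathcal U\setminus\mathcal U_n$ of width $\delta_n\to0$ separately, where $S_0$ is essentially flat and the incurred regret is lower order, and this must hold uniformly over $x_t$. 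This uniform-in-$x$, uniform-in-price control, rather than the summation bookkeeping, is where I expect the main difficulty to lie.
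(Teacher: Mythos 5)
Your proposal is correct and follows essentially the same route as the paper's proof: the same episode-wise exploration/exploitation split with the exploration cost $\lesssim d^{\nicefrac{\alpha}{2+\alpha}}\tau_k^{\nu(\alpha)}$, the same good-event decomposition of the exploitation error into the antitonic estimation term (via \cref{thm:Dumbgen}) plus the $\alpha$-H\"older parametric terms (via \cref{lemma:1} and \cref{proposition:S_theta_properties}, exactly the paper's terms $A$, $B$, $C$ in \cref{lem:J2J3}), and the same exponent-balancing in $d$ and $2^k$ that produces the dichotomy at $\alpha=1/2$ and the two logarithmic factors. The boundary issue you flag is genuine---the paper's proof of \cref{lem:J2J3} silently takes the supremum over all of $\mathcal{U}$ although \cref{thm:Dumbgen} only controls $\mathcal{U}_n$---but it is repairable by monotonicity of $\widehat{S}_k$ and H\"older continuity of $S_\theta$ on the width-$\delta_n$ boundary strip, which contributes an error of the same order $\delta_n^{\alpha}\asymp\rho_n^{\alpha/(2\alpha+1)}$, so your account matches the paper's argument in substance.
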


\begin{proof} {\bf Sketch.} Fix $k\geq 2$ and define $n_k=|I_k|$ and $\widetilde{n}_k=|\widetilde{I}_k|$ and $a_k = |E_k| = n_k + \widetilde{n}_k$. Let $S_0\left(p \mid x\right) \triangleq S_0(p -\theta_0^{\top} x)$ and $\widehat{S}_k\left(p \mid x\right) \triangleq \widehat{S}_k(p -\widehat{\theta}_k^{\top} x)$. For the exploration phase $\mathbb{E}[\sum_{t \in E_k}r_t(p_t^*)-r_t(p_t)]\leq p_{\max}|E_k| \lesssim |E_k|$. Now fix $t \in E'_k$
\begin{align*}
&r_t(p_t^*)-r_t(p_t) =p_t^* S_0\left(p_t^* \mid x_t\right)-p_t S_0\left(p_t \mid x_t\right) \nonumber\\
&=\left\{p_t^* S_0\left(p_t^* \mid x_t\right)-p_t^* \widehat{S}_k\left(p_t^* \mid x_t\right)\right\}+\underset{\leq 0 \text{ by \cref{eq:opt_p}}}{\underbrace{\left\{p_t^* \widehat{S}_k\left(p_t^* \mid x_t\right)-p_t \widehat{S}_k\left(p_t \mid x_t\right)\right\}}}+ \left\{p_t \widehat{S}_k\left(p_t \mid x_t\right)-p_t S_0\left(p_t \mid x_t\right)\right\} \nonumber\\
&\leq p_{\max}\left| S_0(p_t^* \mid x_t)- \widehat{S}_k(p_t^* \mid x_t)\right|+ p_{\max}\left| \widehat{S}_k\left(p_t \mid x_t\right)-S_0\left(p_t \mid x_t\right)\right|  = R_{k,t}(p^*_t)+R_{k,t}(p_t),
\end{align*}
where $R_{k,t}(q) \triangleq |\widehat{S}_k(q-\widehat{\theta}_k^{\top}x_t)-S_0\left(q-\theta_0^{\top}x_t\right)|$ for $q\in\{p_t^*,p_t\}$, $t \in E_k'$.

\begin{lemma}\label{lem:J2J3}
If Assumptions in \cref{thm:regret_upper_bound} hold, for $k$ sufficiently large we have $\mathbb{E}(R_{k,t}(q))\lesssim \left(\nicefrac{\log \widetilde{n}_k}{\widetilde{n}_k}\right)^{\nicefrac{\alpha}{2\alpha+1}}+ \left(\nicefrac{d\log n_k}{n_k}\right)^{\nicefrac{\alpha}{2}}$ for $q\in\{p_t^*,p_t\}$ with $t \in E'_k$.
\end{lemma}

Let $k\geq k_0$ for $k_0$ be sufficiently large as in \cref{lem:J2J3}. Summing up for all $t \in E_k'$, and merging with the exploration phase of episode $k$ we get
$$
\textstyle \mathbb{E}\left[\sum_{t \in \mathcal{J}_k} r_t(p_t^*)-r_t(p_t)\right]  \lesssim |E_k|+|E_k'|\left[ \left(\nicefrac{\log \widetilde{n}_k}{\widetilde{n}_k}\right)^{\nicefrac{\alpha}{2\alpha+1}}+ \left(\nicefrac{d\log n_k}{n_k}\right)^{\nicefrac{\alpha}{2}} \right].
$$
Using that $n_{k} =\widetilde{n}_{k}=\frac{1}{2}a_k =\frac{1}{2}|E_k|= \frac{1}{2}d^{\xi}(\tau_1 2^{k-1})^{\nu} \propto d^{\xi} 2^{k\nu}$ for $\xi,\nu >0$ to be determined such that they minimize the total regret, and that $|E_k'| \leq  |\mathcal{J}_k|= \tau_1 2^{k-1}\propto 2^k$ we get that the RHS of the last inequality is bounded above by
\begin{align*}
d^{\xi} 2^{k\nu} + d^{-\tfrac{\xi \alpha}{2\alpha +1}}2^{k(1-\tfrac{\nu \alpha}{2\alpha +1})}[k+\log(d)]^{\tfrac{\alpha}{2\alpha +1}} + d^{\tfrac{\alpha}{2}(1-\xi)} 2^{k(1-\tfrac{\nu \alpha}{2})}[k+\log(d)]^{\tfrac{\alpha}{2}}.
\end{align*}

Here we prove the result for $\alpha>1$. We defer to \cref{sec:regret_upper_bound} for the complete proof. Consider the exponents of the factor $d$, i.e. $\xi$,$-\tfrac{\xi \alpha}{2\alpha +1}$ and $\tfrac{\alpha}{2}(1-\xi)$. As the second exponent is always negative, the regret is minimized with respect to $d$ if we equalize the first and the third exponent, i.e. $\xi = \tfrac{\alpha}{2}(1-\xi)$ to get $\xi^* = \frac{\alpha}{\alpha + 2}$. Similarly for the exponents of the exponential factor $2^k$, i.e. $\nu$, $1-\tfrac{\nu \alpha}{2\alpha +1}$ and $1-\tfrac{\nu \alpha}{2}$, we equalize the first two factors, to get $\nu^* = \tfrac{2\alpha +1}{3\alpha +1}$. Consequently, the expected regret in episode $k$, $\mathbb{E}\left[\sum_{t \in \mathcal{J}_k} r_t(p_t^*)-r_t(p_t)\right]$ is upper bounded by 
\begin{align*}
&2^{k\tfrac{2\alpha +1}{3\alpha +1}}(d^{\tfrac{\alpha}{\alpha + 2}} + d^{-\tfrac{\alpha^2}{(2\alpha +1)(\alpha +2)}}[k+\log(d)]^{\tfrac{\alpha}{2\alpha +1}}+ d^{\tfrac{\alpha}{\alpha + 2}}[k+\log(d)]^{\tfrac{\alpha}{2}}) \lesssim 2^{k\tfrac{2\alpha +1}{3\alpha +1}}d^{\tfrac{\alpha}{\alpha + 2}}[k+\log(d)]^{\tfrac{\alpha}{2}},
\end{align*}

where we used that $\tfrac{\alpha}{2\alpha+1} < \tfrac{\alpha}{2}$ for $\alpha \in (1/2,1]$. Putting together the phases and using that $K=\left\lceil\log \left(T / \tau_1\right)+1\right\rceil$ we get
\begin{align*}
\textstyle R(T)= \mathbb{E}\left[\sum_{k =k_0}^K\sum_{t \in \mathcal{J}_k} r_t(p_t^*)-r_t(p_t)\right]\lesssim 2^{K\tfrac{2\alpha +1}{3\alpha +1}}d^{\tfrac{\alpha}{\alpha + 2}}[K+\log(d)]^{\tfrac{\alpha}{2}}\lesssim T^{\tfrac{2\alpha +1}{3\alpha +1}}d^{\tfrac{\alpha}{\alpha + 2}}\log^{\tfrac{\alpha}{2}}(dT).
\end{align*}
\end{proof}
\section{Simulations}
We first perform simulations for theoretical validation in \cref{sec:simulation_theoretical} and a simulation to compare our algorithm with the minimax algorithm by \cite{tullii2024improved} and \cite{fan2021policy} algorithm in \cref{sec:comparison_tulli}.

\subsection{Simulation for theoretical validation}\label{sec:simulation_theoretical}
To this end, we replicate the simulation settings used by \cite{fan2021policy}. We set $\mathcal{U} = (-\nicefrac{1}{2},\nicefrac{1}{2})$ (known), the feature dimension $d=3$ (known), the distribution of $X_t \sim \text{Unif}(-\sqrt{\nicefrac{2}{3}},\sqrt{\nicefrac{2}{3}})^{\times d}$ (unknown), and the coefficient $\theta_0^\top=(\alpha_0,\beta_0^{\top})$ (unknown), where $\alpha_0=3, \beta_0=(\nicefrac{2}{3},\nicefrac{2}{3},\nicefrac{2}{3})$. We also choose $p_{\min}=0$ and $p_{\max}=5$ (known). For $F_0:\mathcal{U} \rightarrow \R$ we consider different choices: 
\begin{enumerate}[leftmargin=*, label=(\arabic*)]
\setlength\itemsep{0.5pt}
\item \boldsymbol{$\alpha<1$}: $F_{0,\alpha}(z) =\nicefrac{1}{2}+(\nicefrac{1}{2})^{1-\alpha}\operatorname{sign}(z)|z|^{\alpha}$ for $z\in\mathcal{U}$, for $z > \nicefrac{1}{2}$, for $\alpha \in \{\nicefrac{1}{3},\nicefrac{1}{2},\nicefrac{3}{4}\}$. 
\item \boldsymbol{$\alpha=1$}: we use four choices of $F_0$: a Gaussian $N(0,1)$ truncated at $\mathcal{U}$, the c.d.f. used by \cite{fan2021policy} with density $f_0(z) = 6\left(\frac{1}{4} -z^2\right)\mathbf{1}\{z\in\mathcal{U})\}$, a Laplace with location $0$ and scale $0.2$ truncated at $\mathcal{U}$, and a Cauchy with location $0$ and scale $0.2$ truncated at $\mathcal{U}$.
\end{enumerate}

\begin{figure*}[t]
\centering
\includegraphics[width=0.32\textwidth]{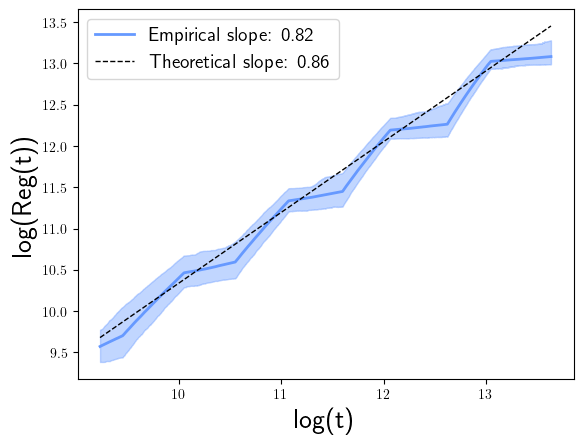}
\includegraphics[width=0.32\textwidth]{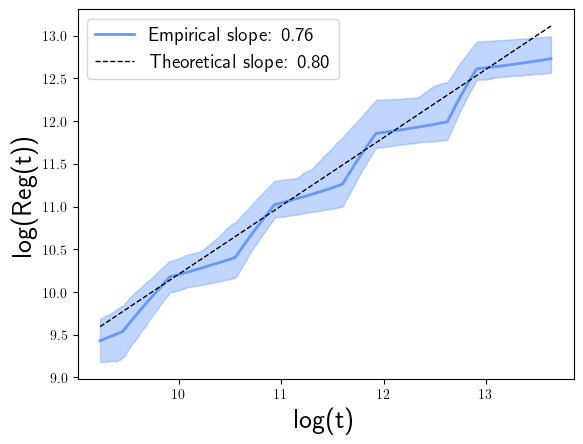}
\includegraphics[width=0.32\textwidth]{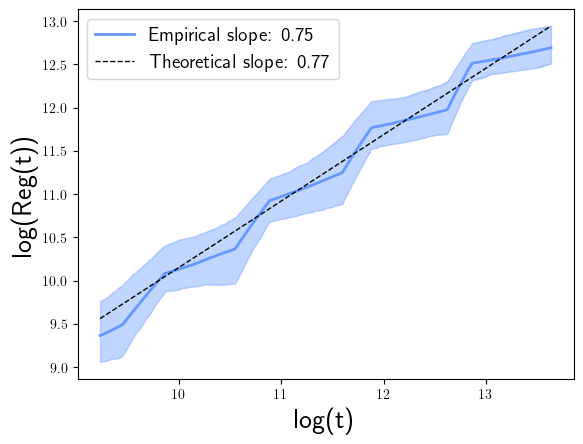}
\includegraphics[width=0.32\textwidth]{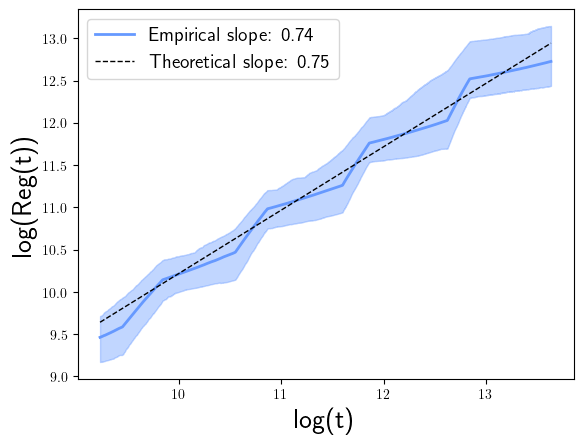}
\includegraphics[width=0.32\textwidth]{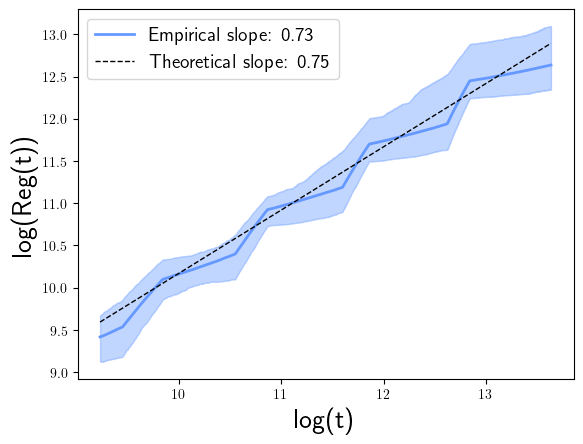}
\includegraphics[width=0.32\textwidth]{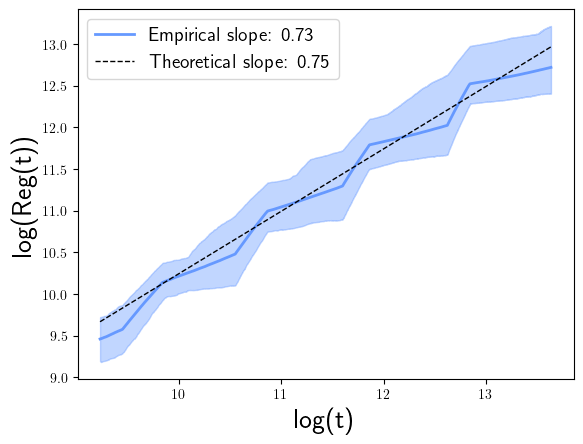}
\vspace{-0.2cm}
\caption{This plot shows the total expected regret (blue line) with $F_{0, \alpha}$, for $\alpha \in\{\nicefrac{1}{3},\nicefrac{1}{2},\nicefrac{3}{4}\}$ in the first row, and a Gaussian, Laplace, and Cauchy c.d.f. in the second row (from the left to the right). We repeated the simulation $36$ times and displayed the corresponding $95 \%$ confidence intervals. The plot is in $\log _2$ - $\log _2$ scale to show the regret rate (empirical slope): a slope of $\eta$ indicated an $\mathcal{O}\left(T^\eta\right)$ regret. The black dashed line corresponds to our theoretical regret upper bound.}
\label{fig:simul_for_validation}
\vspace{-0.2cm}
\end{figure*}

We start with $\tau_1=100$ and compute $K=8$ total episodes. At every time $t$ we follow Algorithm \ref{algo:semiparametric} to compute $p_t$, with the additional computation of the oracle $p_t^*$ and the corresponding cumulative regret $\textstyle \operatorname{Reg}(t)= \sum_{j=1}^t p_t^*S_0(p_t^* - \theta_0^{\top}x_t)-p_tS_0(p_t - \theta_0^{\top}x_t)$. We repeated the experiment $36$ times and we computed the mean and the $95\%$ confidence interval in a $\log_2-\log_2$ plot. As illustrated in \cref{fig:simul_for_validation}, we validate our approach by comparing the estimated slope of the linear regression of $\log_2(t)$ versus $\log_2(\operatorname{Reg}(t))$ with the theoretical upper bound rate. Due to space constraints, the plot corresponding to the $F_0$ used by \cite{fan2021policy} with density $f_0(z) = 6\left(\frac{1}{4} -z^2\right)\mathbf{1}\{z\in\mathcal{U})\}$ is provided in \cref{Appendix:other_plots}.

\begin{remark}[Dependence on $\tau_1$]
Although $\tau_1$ is generally considered a non-tuning parameter -- typically set to $1$ in dynamic pricing algorithms that use the doubling trick (see, e.g., \cite{javanmard2019dynamic,javanmard2020multi,javanmard2024multi}) -- we have included an additional simulation in \cref{fig:tau_1} to demonstrate the robustness of our algorithm with respect to this parameter.
\end{remark}

\begin{figure}
\centering
\includegraphics[width=0.7\linewidth]{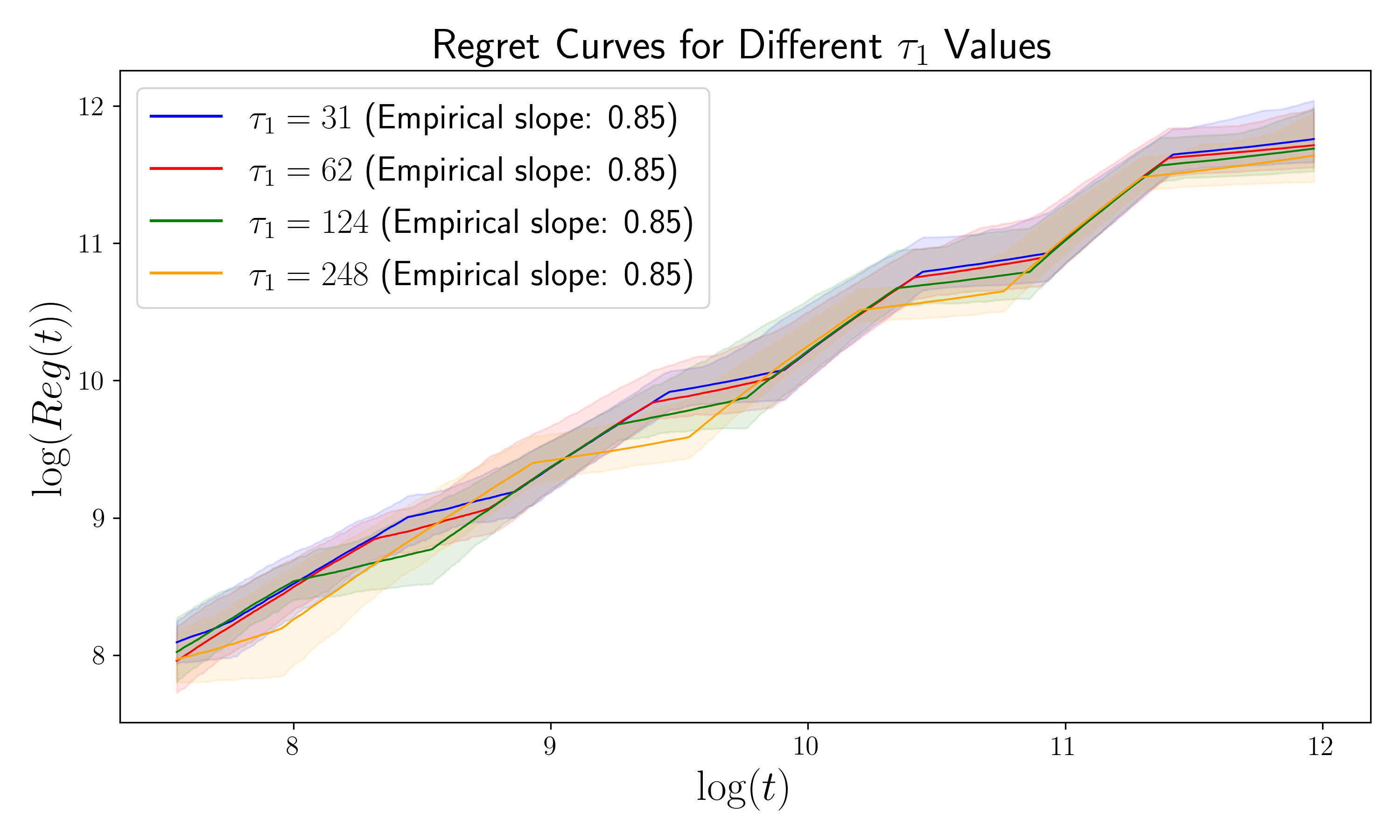}    \caption{This plot in $\log_2-\log_2$ scale shows the cumulative regret over time up to $T= 4000$ for different values of $\tau_1 \in \{31,62,124,248\}$ and with $F_{0,\nicefrac{1}{3}}$ for which theoretical regret rate is $0.86$. For each value of $\tau_1$, we repeated the simulation $36$ times and displayed the corresponding $95 \%$ confidence intervals. As we can see, the regret remains similar across different values of $\tau_1$ and the the empirical slopes are close to the theoretical regret rate.}
\label{fig:tau_1}
\end{figure}

\subsection{Comparison with \citet{tullii2024improved} under Lipschitz assumption of $F_0$}\label{sec:comparison_tulli}

% \begin{wrapfigure}[15]{r}
%     {0.4\textwidth}
%     \centering
%     \vspace{-0.05in}
%     \includegraphics[width=0.45\textwidth]{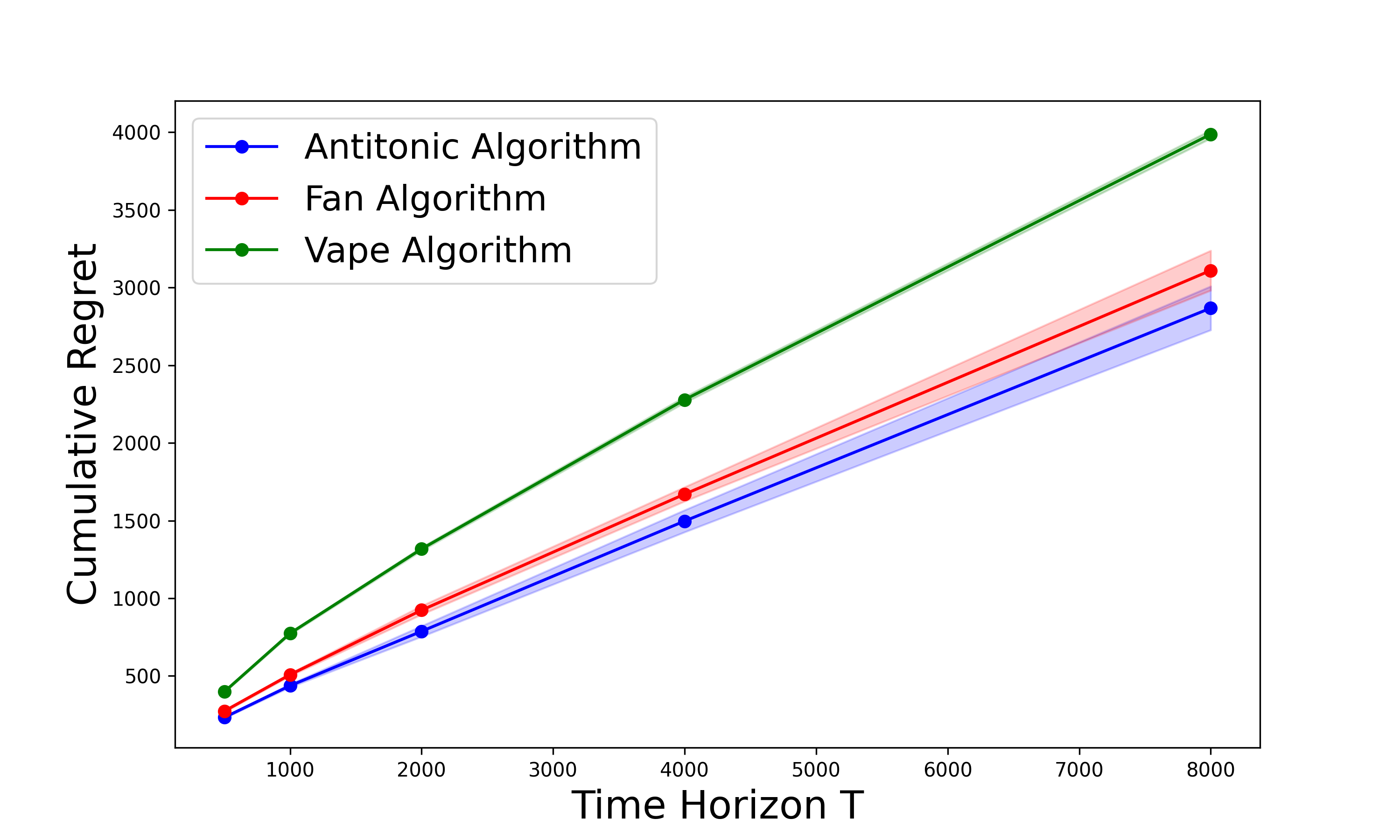}
%     \vspace{-0.15in}
%     \caption{Regret comparison in the simulation setting of \citet{tullii2024improved}.}
%     \vspace{-0.15in}
%     \label{fig:comparison_vape}
% \end{wrapfigure}

We first recall that the regret upper bound by \cite{tullii2024improved} is of order $\widetilde{\mathcal{O}}(T^{2/3})$ under Lipschitz assumption on $F_0$ ($\alpha =1$), which is smaller than our regret upper-bound $\widetilde{\mathcal{O}}(T^{3/4})$. For this reason, we perform the following simulations.

In their work, \citet[Supplemenary Material A]{tullii2024improved} compared their VAPE method to the kernel-based method by \citet{fan2021policy} that is: they built a dataset of $5$ contexts belonging to $\mathbb{R}^3$ generated by a canonical Gaussian distribution and subsequently normalized. Throughout the run, the contexts are chosen from this set uniformly at random, while the noise term is picked from a Gaussian distribution truncated between $-1$ and $1$ with mean $0$ and variance $0.1$. Similarly, also the parameter $\theta_0$ is a normalized vector initially drawn from a Gaussian distribution. Note that for this simulation, the error distribution is twice differentiable (i.e. smoother than what \cite{tullii2024improved} and us allow in our theory), then the kernel-based method by \cite{fan2021policy} is applicable with smoothness parameter $m=2$. \citet{tullii2024improved} showed that the kernel-based method has stronger performance.

% \begin{wrapfigure}{r}{8cm}
% \includegraphics[width=0.5\textwidth]{Pics/Regret_comparison_vape.png}
% \caption{Regret comparison in the simulation setting of \citet{tullii2024improved}.}
% \label{fig:comparison_vape}
% \end{wrapfigure}

\begin{figure}
\centering
\includegraphics[width=0.7\textwidth]{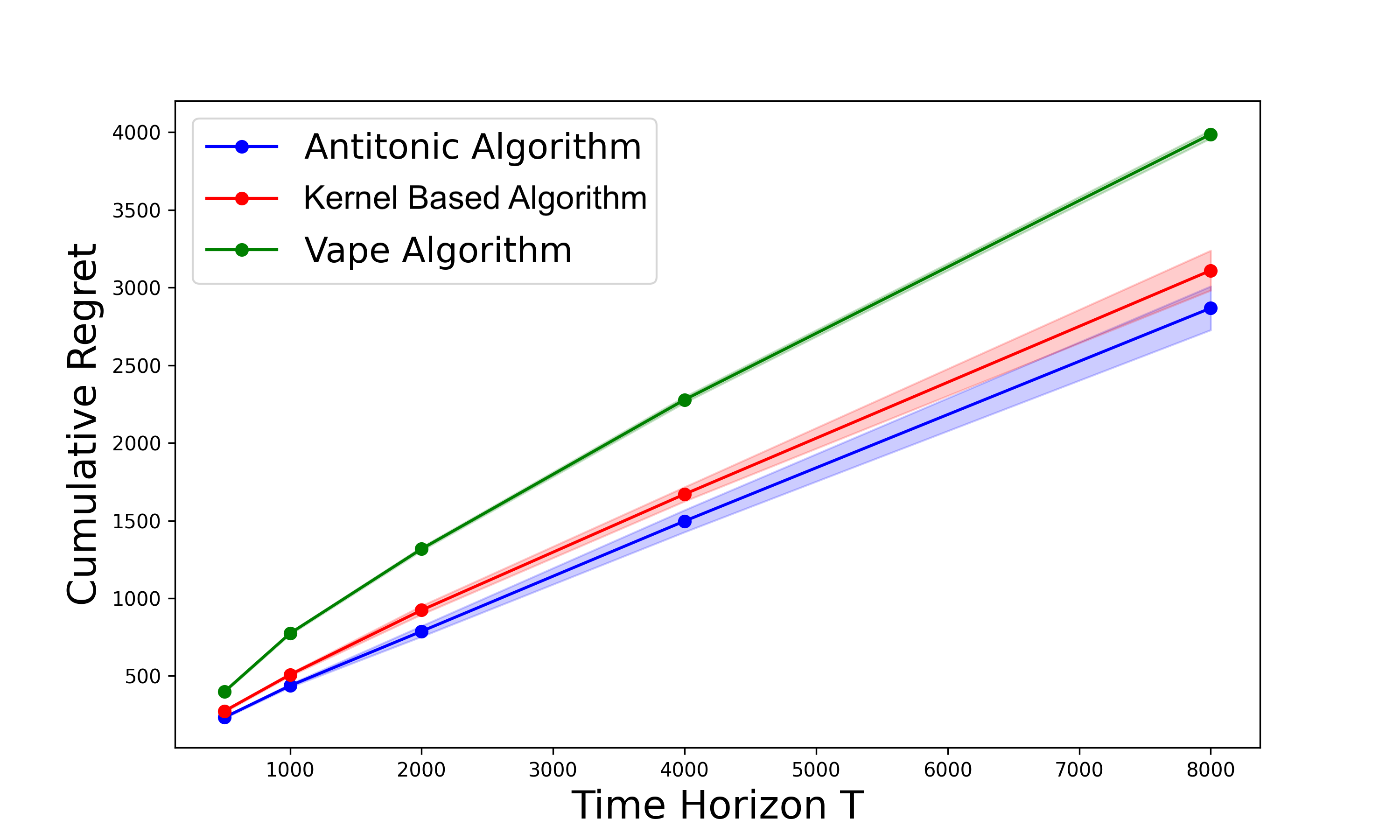}
\caption{Regret comparison in the simulation setting of \citet{tullii2024improved}.}
\label{fig:comparison_vape}
\end{figure}

We apply our antitonic regression-based algorithm with $\alpha=1$, using the same code and simulation setting provided by \citet[Supplemenary Material A]{tullii2024improved}. The algorithm has been tested on time horizons $T \in[500,1000,2000,4000,8000]$. We computed the regret $36$ times and the corresponding 95\% confidence interval. In \cref{fig:comparison_vape} we show the results. Although the kernel-based method by \citet{fan2021policy} applies to distributions of the error that are at least twice differentiable -- which is the case in this simulation -- their algorithm has weaker performance than ours in this setting. Comparing our antitonic method with the VAPE algorithm by \cite{tullii2024improved} (which have the same assumption on $F_0$, i.e. Lipschitzianity of $F_0$) the empirical performance of VAPE is worse than our method up to very large time horizons ($T = 8000$), achieving smaller regret.

\section{Real Application}\label{sec:application}

This study applies our method to a real data set obtained by Welltower Inc to simulate the dynamic pricing process. The dataset consists of various characteristics and the transaction price for units in the United States (see \cref{tab:data-description} for more details). In our experiments, we present each rental unit to the dynamic pricing algorithm in a sequential fashion to simulate the dynamic pricing game. The unique aspect of the dataset is it includes the exact transaction price, which allows us to evaluate the regret of the algorithm directly.

\begin{table}[h]
\caption{Dataset description} \label{tab:data-description}
\begin{center}
\begin{tabular}{lp{9cm}} % Adjust the width (e.g., 8cm) as needed
\textbf{Variable}  &\textbf{Description} \\
\hline \\
$v_t$: \textbf{act\_rate\_d} & Final transaction price.\\
$x_{t,1}$: \textbf{mkt\_rate\_d}  & Typical rate of similar unit in the primary market area. \\
$x_{t,2}$: \textbf{sqft} & Square footage of unit. \\
$x_{t,3}$: \textbf{unit\_type} & Type of unit (bedroom, studio, or other).\\
$x_{t,4}$: \textbf{med\_home} & Median home value of primary market area.
\end{tabular}
\end{center}
\end{table}

% \begin{wrapfigure}{r}{10cm}
% \includegraphics[width=10cm,height=5cm]{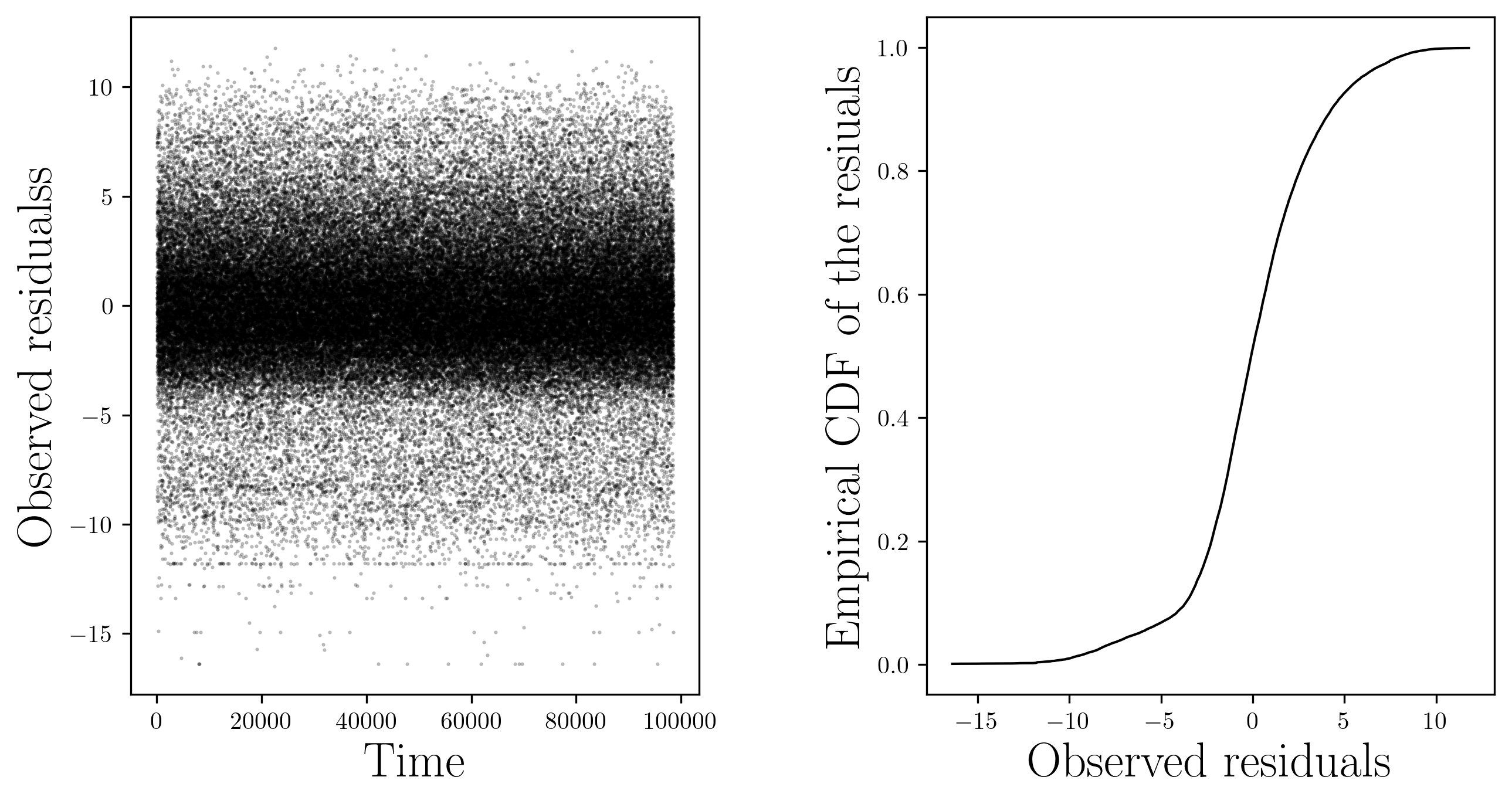}
% \caption{Residuals}
% \label{fig:residuals}
% \end{wrapfigure}

\begin{figure}
\centering
\includegraphics[width=11cm,height=5cm]{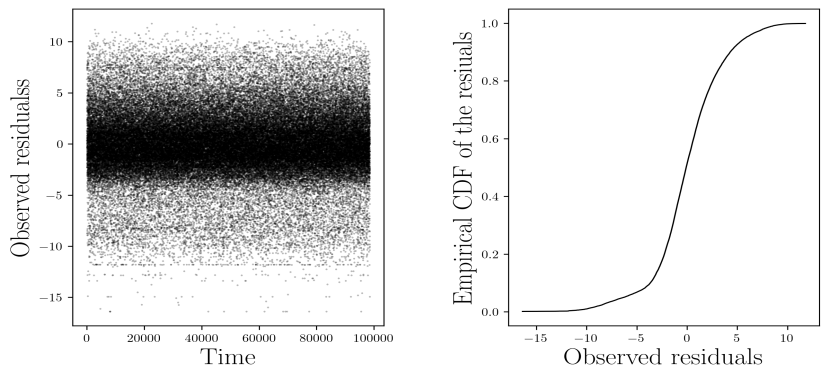}
\caption{Residuals}
\label{fig:residuals}
\end{figure}

This dataset doesn't contain the variable $y_t$, i.e. whether the sales occurred. Our knowledge is limited to the final transaction prices \textbf{act\_rate\_d}. To overcome this we make the following adjustment: at each time point $t$, we consider the transaction price \textbf{act\_rate\_d} as the customer valuation $v_t$, which we treat as unobserved; a price $p_t$ is posted by the firm, which finally collects the data point $(x_t,y_t)$ where $y_t \triangleq  \mathbf{1}\{v_t \geq p_t\}$. Here the vector $x_t$ contains all the variables in \cref{tab:data-description} except \textbf{act\_rate\_d}, and $1$ in the first entry to account for the intercept. As we assume $v_t = \theta_0^{\top}x_t + z_t$ for some unknown $\theta_0$ and unknown c.d.f. $F_0$ of $z_t$, we validate this linear validation model. To this end, we perform a linear model using data ${(x_t,v_t)}$: the p-value of the $F$ statistics is close to $0$ and the slopes of all the variables are statistically significant at a significance level of $0.05$ (see \cref{fig:p-val}). The residuals and the associated empirical distribution function (an estimate of $F_0$) are depicted in \cref{fig:residuals}, from which we notice that $\mathcal{U}$ is approximatively $(-17,12)$. Moreover $p_{\min}$ and $p_{\max}$ are the maximum and minimum values of $v_t$.

\begin{table}[t]
\centering
\begin{minipage}{0.49\textwidth}
\caption{Summary Statistics}
\label{fig:summary}
\centering
\resizebox{\textwidth}{!}{
\begin{tabular}{lrrrrr}
\toprule
 & mkt\_rate\_d & sqft & 12min\_med\_home\_val & 20min\_med\_home\_val & act\_rate\_d \\
\midrule
min & 0.00 & 0.00 & 131131.52 & 139845.11 & 0.00 \\
25\% & 1764.12 & 334.00 & 356391.42 & 350603.75 & 139.00 \\
50\% & 2598.32 & 428.00 & 478335.94 & 481045.88 & 181.00 \\
mean & 3210.69 & 465.64 & 561100.96 & 531395.52 & 200.19 \\
75\% & 3811.40 & 546.00 & 689176.88 & 681712.58 & 233.00 \\
max & 56475.71 & 1782.00 & 1650871.95 & 1529372.81 & 1494.63 \\
\bottomrule
\end{tabular}
}
\end{minipage}%
\hspace{0.001\textwidth}
\begin{minipage}{0.49\textwidth}
\centering
\caption{Ordinary least squares: $v_t=\theta^{\top}x_t+z_t$}
\label{fig:p-val}
\resizebox{\textwidth}{!}{
\begin{tabular}{lrrrrrr}
\hline
Model:  & OLS   & Adj. R-squared:     & 0.597   \\
Df Model:           & 6                & F-statistic:        & 2.432e+04    \\
Df Residuals:       & 98552            & Prob (F-statistic): & 0.00         \\
R-squared:          & 0.597            & Scale:              & 11.956       \\
\hline
\hline
&   Coef. & Std.Err. &         t & P$> |$t$|$ &  [0.025 &  0.975]  \\
\hline
const                 & 15.0307 &   0.0116 & 1299.4124 &      0.0000 & 15.0081 & 15.0534  \\
mkt\_rate\_d          &  7.7899 &   0.0272 &  286.1036 &      0.0000 &  7.7365 &  7.8432  \\
sqft                  & -0.5291 &   0.0164 &  -32.3591 &      0.0000 & -0.5612 & -0.4971  \\
12min\_med\_home\_val &  0.5431 &   0.0136 &   39.8486 &      0.0000 &  0.5164 &  0.5698  \\
unit\_type\_2 bed     &  0.1804 &   0.0167 &   10.7981 &      0.0000 &  0.1477 &  0.2132  \\
unit\_type\_other     &  0.1420 &   0.0175 &    8.1030 &      0.0000 &  0.1076 &  0.1763  \\
unit\_type\_studio    &  0.0753 &   0.0218 &    3.4532 &      0.0006 &  0.0326 &  0.1181  \\
\hline
\end{tabular}
}
\end{minipage}
\end{table}

Prior to implementing the methods, we conducted cross-validation to tune the UCB algorithm's parameters $\lambda$ and $C_2$, as defined in \cite{luo2022contextual}. We searched over a grid with $(\lambda,C_2) \in \{0.1, 0.5, 1, 1.5, 2, 5\}\times\{5, 10, 15, 20, 30\}$. After selecting the optimal parameters, we ran the algorithm for each method. The initial episode length was set to $\tau_1 = 150$, with subsequent episodes doubling in length according to $\tau_k = \tau_1 2^{k-1}$, for a total of $K = 4$ episodes. Each algorithm then chooses its exploration phase according to its rule. We conducted $36$ iterations, randomly shuffling the data before each run. For our algorithm, we set $\alpha=1$.

\begin{figure}[t]
\centering
\includegraphics[width=0.9\textwidth]{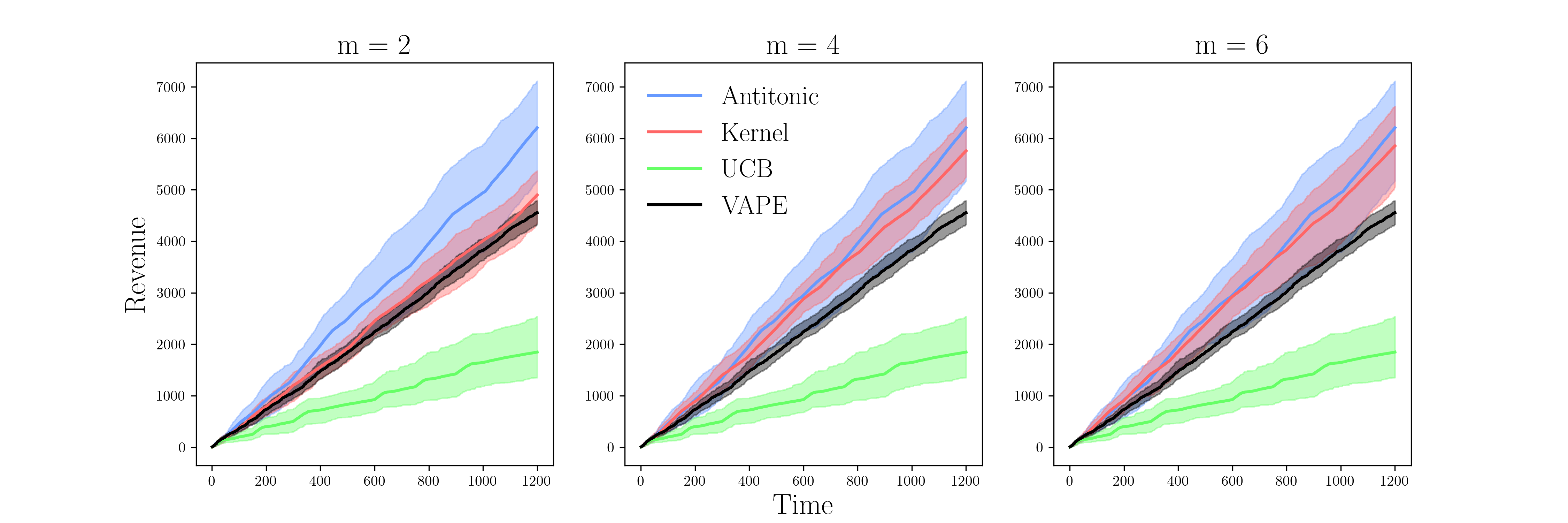}
\vspace{-0.2cm}
\caption{Revenue comparison}
\label{fig:Revenue_application}
\vspace{-0.2cm}
\end{figure}

\cref{fig:Revenue_application} showcases the (empirical) revenue $\textstyle \operatorname{Rev}(t)= \sum_{j=1}^t p_ty_t$, obtained using our antitonic method (blue line), the UCB method by \cite{luo2022contextual} (green line), the kernel method by \cite{fan2021policy} (red line) and the VAPE algorithm by \cite{tullii2024improved} (black line). Higher lines indicate better performance. We present three plots corresponding to different values of the smoothness parameter $m$, which affects only the kernel-based method by \citet{fan2021policy}. We let the antitonic, UCB, and VAPE methods remain the same across all three plots, while the kernel method's performance changes with varying $m$. Overall, our antitonic method generally outperforms the other approaches. The kernel-based method by \cite{fan2021policy} also performs well and tends to improve as the smoothness parameter $m$ increases. Despite tuning its parameters, the UCB method by \cite{luo2022contextual} performs poorly, while as far as the VAPE algorithm by \cite{tullii2024improved} shows worse performance than our method and kernel-based method by \citet{fan2021policy}

\section{Conclusions}
We introduced a novel method for estimating the market noise distribution \( F_0 \) by leveraging its natural shape constraint: monotonicity. Our analysis led to an expected upper bound on the total regret of order $\widetilde{\mathcal{O}}(T^{\nu(\alpha)}d^{\nicefrac{\alpha}{2+\alpha}})$, where $\nu(\alpha)$ is defined in \cref{eq:zeta}, matching certain previous rates in $T$ when $\alpha=1$ and enjoying the additional advantage of being tuning parameter-free. Compared to existing methods such as \cite{tullii2024improved, fan2021policy, luo2022contextual}, our proposed algorithm shows stronger empirical performance in both simulations and real data applications.

An interesting direction for future research is the study of lower bounds on the expected regret under the H\"older condition on $S_0$ and an investigation into whether our rate matches this bound. In the special case when $\alpha=1$ (i.e. Lipschitzianity of $F_0$), the regret lower bound of $\Omega(T^{2/3})$ established in \citet{xu2022towards}, has been attained in \citet{tullii2024improved}. Another promising extension, particularly for practical applications, is the incorporation of optimal design strategies, as discussed in \cref{remark:optimal_design}. This could significantly improve the multiplicative constants in the regret, leading to more efficient algorithms.

Another promising avenue for further research is to integrate the \citet{tullii2024improved} algorithm with our antitonic regression approach. The \citet{tullii2024improved} method offers a better theoretical upper bound, whereas our algorithm shows stronger empirical performance. A potential strategy would be to retain the Valuation Approximation step from their Algorithm 2 while replacing the Price Elimination step with our antitonic regression procedure (as highlighted by the blue box in \cref{algo:semiparametric}). The final optimal price would then be computed as in \cref{eq:ots_population_p}. We acknowledge that this idea is preliminary, and further investigation is required to rigorously formalize the integration and to perform a comprehensive regret analysis.

\section{Code Availability}

The codes are available at \url{https://github.com/dbracale/DP_via_Antitonic_TMLR_2025}.

\section*{Acknowledgements} We used generative AI tools when preparing the manuscript; we remain responsible for all opinions, findings, and conclusions or recommendations expressed in the paper. This paper is based on research supported by the National Science Foundation under grants 2027737, 2113373, and 2414918. 

\bibliography{main}
\bibliographystyle{tmlr}

\newpage
\appendix

%\clearpage
%\thispagestyle{empty}

% \begin{center}
%     \Huge \textbf{SUPPLEMENTARY MATERIAL}
% \end{center}

\section{Missing Proofs}
\subsection{Proof of \cref{proposition:S_theta_properties}}

\begin{proof}
By \cref{eq:expected_y_t_theta} we have $S_{\theta}(u)= \int S_{0}(u +(\theta -\theta_{0})^{\top}x)dP_x(x)$, from which we note that $S_{\theta}$ is non-increasing, because $S_0$ is non-increasing. Moreover if $S_0$ is $\alpha$-H\"older,
\begin{align*}
|S_{\theta}(u)-S_{\theta}(v)|&=\int |S_{0}(u +(\theta -\theta_{0})^{\top}x)-S_{0}(v +(\theta -\theta_{0})^{\top}x)|dP_x(x)\\
&  \leq \int C_1|u-v|^{\alpha}dP_x(x) =C_1|u-v|^{\alpha},
\end{align*}
and
\begin{align*}
|S_{\theta}(u)-S_{0}(u)|&= \int |S_{0}(u +(\theta -\theta_{0})^{\top}x)-S_{0}(u)|dP_x(x)\\
&  \leq \int C_1|(\theta -\theta_{0})^{\top}x|^{\alpha}dP_x(x) \\
&\leq C_1 R_{\mathcal{X}}^{\alpha}\|\theta -\theta_{0}\|^{\alpha}_2,
\end{align*}
where in the last inequality we used Cauchy-Scwartz and that $\|x\|_2\leq R_{\mathcal{X}}$.
\end{proof}

\subsection{Proof of \cref{thm:Dumbgen}}
We first need two Lemmas: \cref{lemma:Hoeffding} and \cref{lemma:A_n}.

\begin{lemma}\label{lemma:Hoeffding}
Let \begin{align*}
  \bar{S}_{rs}(\theta) \triangleq
\frac{1}{o_{r s}} \sum_{j=r}^s o_j S_{\theta}(w_j),
\end{align*}
and 
$$
  M_n(\theta)\triangleq\max _{1 \leq r \leq s \leq m} o_{r s}^{1 / 2}|\widehat{y}_{rs}-\bar{S}_{rs}(\theta)|.
$$
Then for any constant $D>1$,
$$
 \mathbb{P}\left(M_n(\theta) \leq(D \log n)^{1 / 2}\right)\leq 1-(\tfrac{n+1}{n^D})^2.
$$
\end{lemma}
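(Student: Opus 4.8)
The plan is to treat this as a routine maximal-deviation bound obtained by pairing a one-pair Hoeffding estimate with a union bound over all admissible index pairs $(r,s)$, working throughout conditionally on the design $\{w_t\}_{t\in\mathcal{T}}$ and on the value of $\theta$. Recall from Box~\ref{box:exploration_S_k} that the $w_t$ are drawn independently of everything else, so conditioning on the whole design does not distort the response mechanism. The first step is to pin down the conditional law of the responses. Given $w_i=u$, \cref{eq:expected_y_t_theta} gives $\mathbb{E}[y_i\mid w_i=u]=S_\theta(u)$, and since each $y_i$ is a measurable function of the independent triples $(w_i,x_i,z_i)$, the family $\{y_i\}$ is, conditionally on the full design, independent with $y_i\sim\mathrm{Bernoulli}(S_\theta(w_i))$. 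Hence for fixed $1\le r\le s\le m$ the block average $\widehat{y}_{rs}=o_{rs}^{-1}\sum_{i:\,u_r\le w_i\le u_s}y_i$ is a mean of $o_{rs}$ independent $[0,1]$-valued variables whose conditional mean is exactly $\bar{S}_{rs}(\theta)$. This reduction is the crux, and it is precisely here that observing the binary $y_t$ rather than $v_t$ directly pays off: Hoeffding applies with range $1$ and with a bound that is free of $\theta$.

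Second, I would apply Hoeffding's inequality pairwise: for every $s>0$, $\mathbb{P}(|\widehat{y}_{rs}-\bar{S}_{rs}(\theta)|>s)\le 2\exp(-2\,o_{rs}\,s^2)$. Taking $s=(D\log n/o_{rs})^{1/2}$ turns the event $o_{rs}^{1/2}|\widehat{y}_{rs}-\bar{S}_{rs}(\theta)|>(D\log n)^{1/2}$ into a tail of at most $2\exp(-2D\log n)=2n^{-2D}$, and crucially this estimate is uniform in the pair $(r,s)$ and in $\theta$. Third, I would union bound over the at most $\binom{m+1}{2}\le \tfrac{n(n+1)}{2}$ admissible pairs (using $m\le n$), which yields $\mathbb{P}(M_n(\theta)>(D\log n)^{1/2})\le \tfrac{n(n+1)}{2}\cdot 2n^{-2D}=n(n+1)n^{-2D}\le (n+1)^2 n^{-2D}=\bigl(\tfrac{n+1}{n^D}\bigr)^2$. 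Recording this control on the failure probability in complementary form then delivers exactly the bound in the statement, $\mathbb{P}(M_n(\theta)\le (D\log n)^{1/2})\le 1-\bigl(\tfrac{n+1}{n^D}\bigr)^2$.

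The main obstacle is not the arithmetic but the careful bookkeeping behind the conditional independence claim: one must check that, after conditioning on the design, the responses remain independent Bernoulli variables with means $S_\theta(w_i)$ for \emph{every} value of $\theta$, so that the Hoeffding constant is genuinely $\theta$-free and the same estimate holds uniformly over $\theta$ (as flagged in the preceding remark). The only other place needing attention is the exact pair count: after multiplying the per-pair bound by the factor $2$ from the two-sided Hoeffding estimate, the number of pairs must be kept at $\le(n+1)^2/2$ so that the product collapses to precisely $\bigl(\tfrac{n+1}{n^D}\bigr)^2$ rather than a looser constant.
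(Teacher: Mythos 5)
Your proof is correct and follows essentially the same route as the paper's: a two-sided Hoeffding bound for each block average $\widehat{y}_{rs}$ (valid because, conditionally on the design, the $y_i$ are independent $\{0,1\}$-valued variables with means $S_\theta(w_i)$, so the bound is free of $\theta$), followed by a union bound over the at most $\binom{m+1}{2}\le (n+1)^2/2$ pairs, giving $\mathbb{P}\bigl(M_n(\theta)>(D\log n)^{1/2}\bigr)\le \bigl(\tfrac{n+1}{n^D}\bigr)^2$. One transcription note: your derivation correctly yields $\mathbb{P}\bigl(M_n(\theta)\le(D\log n)^{1/2}\bigr)\ge 1-\bigl(\tfrac{n+1}{n^D}\bigr)^2$, and the ``$\le$'' in the lemma statement (which your final display copies, contradicting your own argument) is a typo in the paper --- the intended direction ``$\ge$'' is the one actually invoked later in the proof of \cref{thm:Dumbgen}.
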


\begin{proof}
First, the by Hoeffding's inequality, since $y_j$ are independent random variables taking values $\{0,1\}$ with mean $S_{\theta}(w_t)$, for every $\eta>0$ we have
$$
\mathbb{P}\left[\sqrt{o_{rs}}|\widehat{y}_{sr}-\bar{S}_{sr}(\theta)| \geq \eta\right] \leq 2e^{-2\eta^2}.
$$
Note that $M_n$ is the maximum of the $\binom{m+1}{2}$ quantities
$$
  o_{r s}^{1 / 2}|\widehat{y}_{rs}-\bar{S}_{rs}(\theta)|.
$$
Consequently,
$$
\begin{aligned}
\mathbb{P}\left(M_n(\theta) \geq \eta_n\right) & \leq \sum_{1 \leq r \leq s \leq m} \mathbb{P}\left(o_{r s}^{1 / 2}|\widehat{y}_{rs}-\bar{S}_{rs}(\theta)| \geq \eta_n\right) \\
  & \leq 2\binom{m}{2} \exp \left(-2 \eta_n^2\right) \\
  & \leq \exp \left(2 \log (n+1)-2 \eta_n^2\right)\\
  & \leq \exp \left(2 \log ((n+1)/n^D)\right) = (\tfrac{n+1}{n^D})^2,
\end{aligned}
$$
for arbitrary $\eta_n \geq 0$. But the right hand side converges to zero as $n \rightarrow \infty$ if $\eta_n=(D \log n)^{1 / 2}$ for some $D>1$.
\end{proof}

Before proceeding with the technical \cref{lemma:A_n}, let's define
$$
\rho_n\triangleq \frac{\log n}{n},
$$ 
and $\lambda(\cdot)$ the Lebesgue measure, and denote by $P_n(\cdot)$ the empirical measure of the design points $w_{t}$, that means
$$
P_n(B)\triangleq\frac{1}{n}\#\left\{t \in \mathcal{T}: w_{t}\in B\right\} \quad \text { for } B \subset \mathcal{U}.
$$

\begin{lemma}\label{lemma:A_n}
Let $w_1,w_2,\dots,w_n$ i.i.d. points with density $f_w$ that satisfies $\inf_{u \in \mathcal{U}} f_w (u) \geq C_2$ for some universal constant $C_2>0$ (which is the case for the uniform distribution), then for a given constant $\kappa >0$, and for any $\gamma >2$, there exists $n_0=n_0(\gamma,\kappa,\alpha) \in \mathbb{N}$ and a sequence $\epsilon_n = \epsilon_n(\gamma,\kappa,\alpha)>0$, $\epsilon_n \rightarrow 0$ such that
$$
\mathbb{P}\left(A_{n,\gamma}\right) > 1-\frac{1}{2(n+2)^{\gamma -2}}, \quad n \geq n_0
$$
where $A_{n,\gamma}$ is the event
$$
  \inf\left\{\frac{P_n\left(\mathcal{U}_n\right)}{\lambda\left(\mathcal{U}_n\right)}:\mathcal{U}_n \subset \mathcal{U}, \lambda\left(\mathcal{U}_n\right) \geq \delta_n\triangleq \kappa  \rho_n^{1 /(2\alpha +1)} \right\}\geq C_2(1-\epsilon_n).
$$
\end{lemma}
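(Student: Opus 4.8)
The plan is to prove a uniform relative lower-deviation bound for the empirical measure $P_n$ over intervals, from which the stated event follows. Although $\mathcal{U}_n$ is written as an arbitrary measurable subset, the only sets that matter for \cref{thm:Dumbgen} are the intervals $[u_r,u_s]$ appearing in the minimax representation of $\widehat{S}_\theta$ (for a general measurable set of positive measure one could avoid all $n$ design points and make the ratio vanish), so I would first reduce to controlling $\inf\{P_n(I)/\lambda(I) : I \subset \mathcal{U} \text{ an interval},\ \lambda(I) \geq \delta_n\}$. Writing $p_I \triangleq P_w(I) = \int_I f_w$, the density lower bound gives $p_I/\lambda(I) \geq C_2$, and the factorization $P_n(I)/\lambda(I) = (P_n(I)/p_I)\,(p_I/\lambda(I)) \geq C_2\, P_n(I)/p_I$ shows it suffices to prove that $P_n(I) \geq (1-\epsilon)\,p_I$ holds simultaneously for all such intervals, for a suitable $\epsilon \to 0$.

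Next, for a single fixed interval the count $n P_n(I)$ is $\mathrm{Binomial}(n,p_I)$, so the multiplicative Chernoff lower-tail bound yields $\mathbb{P}(P_n(I) \leq (1-\epsilon)p_I) \leq \exp(-n p_I \epsilon^2/2)$, and since $p_I \geq C_2\lambda(I) \geq C_2\delta_n$ the exponent is at least $C_2\, n\delta_n\epsilon^2/2$. To make this uniform over the continuum of intervals I would discretize: fix a grid of polynomially small width $h_n$ (e.g.\ $h_n=n^{-2}$), and for any $I$ with $\lambda(I)\geq\delta_n$ let $I^-$ be the largest grid-aligned interval contained in $I$, so that $P_n(I) \geq P_n(I^-)$ and $\lambda(I^-)\geq \lambda(I)-2h_n$. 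Applying the Chernoff bound to each of the $N_n=O((|\mathcal{U}|/h_n)^2)$ grid-aligned intervals and union bounding gives, on the complementary event, that every interval satisfies $P_n(I)\geq P_n(I^-)\geq (1-\epsilon)p_{I^-}\geq (1-\epsilon)C_2(\lambda(I)-2h_n)$, hence $P_n(I)/\lambda(I) \geq C_2(1-\epsilon)(1-2h_n/\delta_n)$. Setting $\epsilon_n$ via $1-\epsilon_n=(1-\epsilon)(1-2h_n/\delta_n)$ then realizes the event $A_{n,\gamma}$.

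It remains to choose the free parameters so that both $\epsilon_n\to 0$ and the union-bound failure probability is at most $\tfrac{1}{2(n+2)^{\gamma-2}}$. The union bound costs $\log N_n=O(\log n)$, while each term contributes an exponent at least $C_2\,n\delta_n\epsilon^2/2$. The key rate identity is that with $\delta_n=\kappa\rho_n^{1/(2\alpha+1)}$ and the choice $\epsilon\asymp\rho_n^{\alpha/(2\alpha+1)}$ the exponents $\tfrac{1}{2\alpha+1}$ and $\tfrac{2\alpha}{2\alpha+1}$ sum to $1$, so $n\delta_n\epsilon^2\asymp n\rho_n=\log n$. Thus taking $\epsilon=B\,\rho_n^{\alpha/(2\alpha+1)}$ with a constant $B=B(\gamma,\kappa,\alpha,C_2)$ large enough makes the per-interval exponent a high enough multiple of $\log n$ to absorb $\log N_n$ and leave the target $(\gamma-2)\log(n+2)$. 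Since $h_n/\delta_n\to 0$, the resulting $\epsilon_n\asymp\rho_n^{\alpha/(2\alpha+1)}\to 0$ as required, and $n_0$ is the threshold past which the large-$n$ bookkeeping (e.g.\ $\lambda(I)-2h_n\geq\tfrac12\delta_n$ and the constant arithmetic in the exponent) holds.

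The main obstacle is exactly this rate matching: the single scale $\delta_n$ must simultaneously keep the per-interval Chernoff exponent large (via $p_I\gtrsim\delta_n$) and be compatible with the target relative accuracy $\epsilon_n$ through $n\delta_n\epsilon_n^2\asymp\log n$, which forces $\epsilon_n\asymp\rho_n^{\alpha/(2\alpha+1)}$ --- precisely the rate that propagates into \cref{thm:Dumbgen}. A cleaner but less elementary alternative to the discretization step would be to invoke a Vapnik--Chervonenkis relative-deviation inequality for the class of intervals (VC dimension $2$), which controls $\sup_I |P_n(I)-p_I|/\sqrt{p_I}$ directly; the discretization route avoids citing that machinery at the cost of the explicit grid bookkeeping above.
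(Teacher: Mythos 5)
Your proof is correct, but it is a genuinely different argument from the paper's. The paper does not prove \cref{lemma:A_n} from scratch: it reduces the claim in two lines to a ratio bound $\inf\{P_n(\mathcal{U}_n)/P(\mathcal{U}_n)\} \geq 1-\epsilon_n$ cited from \citet[Section 4.3]{mosching2020monotone}, converting from $P$ to $\lambda$ via $P(\mathcal{U}_n)\geq C_2\lambda(\mathcal{U}_n)$ and inheriting their explicit $\epsilon_n=\max(c_n/\delta_n,\sqrt{2c_n/\delta_n})+(n\delta_n)^{-1}$ with $c_n=\gamma\log(n+2)/(n+1)$. You instead give a self-contained proof via the multiplicative Chernoff lower tail for $\mathrm{Bin}(n,p_I)$, a grid discretization with $O(n^{4})$ grid-aligned intervals, and a union bound; your rate bookkeeping is sound, since with $\delta_n=\kappa\rho_n^{1/(2\alpha+1)}$ and $\epsilon\asymp\rho_n^{\alpha/(2\alpha+1)}$ the identity $\tfrac{1}{2\alpha+1}+\tfrac{2\alpha}{2\alpha+1}=1$ gives $n\delta_n\epsilon^2\asymp\log n$, so a large enough constant $B$ absorbs the $O(\log n)$ union-bound cost and leaves the target $(\gamma-2)\log(n+2)$, and your $\epsilon_n\asymp\rho_n^{\alpha/(2\alpha+1)}$ has the same order as the cited one (where the $\sqrt{2c_n/\delta_n}$ term dominates). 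Your preliminary reduction to intervals is not pedantry but a genuine repair: as literally stated over arbitrary measurable subsets the event is empty (take $\mathcal{U}\setminus\{w_1,\dots,w_n\}$, which has full Lebesgue measure but zero empirical mass), and the statement is only true, and only needed in the proof of \cref{thm:Dumbgen}, for intervals such as $[u-\delta_n,u]$ --- a restriction the paper inherits silently from the interval-based formulation in \citet{mosching2020monotone}. What each route buys: the paper's citation is shorter and yields fully explicit constants from the source; your argument is elementary and self-contained (Hoeffding/Chernoff plus a grid, or alternatively a VC relative-deviation bound for intervals, as you note), at the cost of the discretization bookkeeping and an $\epsilon_n$ whose constant also depends on $C_2$, which is harmless here since $C_2=1/|\mathcal{U}|$ is fixed.
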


\begin{proof}
This is immediately derived from the proof of the more general result by \citet[Section 4.3]{mosching2020monotone} which can be stated as follows: let $\delta_n>0$ such that $\delta_n \rightarrow 0$ while $n\delta_n /\log(n) \rightarrow \infty$ (as $n\rightarrow \infty$). Then for every $\gamma>2$, there exists $n_0 = n_0 (\gamma, \delta_n)$ and $\epsilon_n = \epsilon_n(\gamma,\delta_n)>0$, $\epsilon_n \rightarrow 0$ such that 

$$  \mathbb{P}\left(\inf\left\{\frac{P_n\left(\mathcal{U}_n\right)}{P\left(\mathcal{U}_n\right)}:\mathcal{U}_n \subset \mathcal{U}, P\left(\mathcal{U}_n\right) \geq \delta_n \right\}\geq 1-\epsilon_n\right) > 1-\frac{1}{2(n+2)^{\gamma -2}}, \quad n \geq n_0,
$$
where $P(\cdot)$ is the probability measure of the design points $w_t$, that is
$$
P(B)\triangleq\int_B f_w(w)dw, \quad \text { for } B \subset \mathcal{U},
$$
and
$$
\epsilon_n\triangleq\max \left(c_n / \delta_n, \sqrt{2 c_n / \delta_n}\right)+\left(n \delta_n\right)^{-1} \rightarrow 0,
$$
where $c_n\triangleq \gamma \log (n+2) /(n+1)$. The value $n_0$ is the smallest integer $n$ that satisfies $\epsilon_n<1$.
\end{proof}

Now we prove \cref{thm:Dumbgen}.

\begin{proof}
Let $n$ be sufficiently large so that $\mathcal{U}_n \neq \emptyset$ and such that the event $A_{n,\gamma}$ in \cref{lemma:A_n} occurs. Since $f_w$ is the uniform distribution, the value $C_2$ defined in \cref{lemma:A_n} corresponds to $1/|\mathcal{U}|$. For $u \in \mathcal{U}_n$ the indices
\begin{align*}
  r(u) & \triangleq\min \left\{j \in\{1, \ldots, m\}: u_j \geq u-\delta_n\right\}, \\
  j(u) & \triangleq\max \left\{j \in\{1, \ldots, m\}: u_j \leq u\right\},
\end{align*}
are well-defined, because $\left[u-\delta_n, u\right]$ is a subinterval of $I$ of length $\delta_n$. Note that by \cref{lemma:A_n} this interval contains at least one observation $u_j$. Moreover,
\begin{align*}
& r(u) \leq j(u),\\
& u-\delta_n \leq u_{r(u)} \leq u_{j(u)}\leq u, \\
& o_{r(u) j(u)}=o_n\left(\left[u-\delta_n, u\right]\right) \geq C_2(1-\epsilon_n) n \delta_n,
\end{align*}
where $\epsilon_n$ is defined as in \cref{lemma:A_n}. Consequently, with $M_n(\theta)$ as in \cref{lemma:Hoeffding}, we have 
\begin{align*}
\widehat{S}_{\theta}(u)-S_{\theta}(u)& \leq \widehat{S}_{\theta}(u_{j(u)})-S_{\theta}(u) \\
& =\min _{r \leq j(u)} \max _{s \geq j(u)} \widehat{y}_{r s}-S_{\theta}(u) \\
& \leq \max _{s \geq j(u)} \widehat{y}_{r(u) s}-S_{\theta}(u) \\
& \leq o_{r(u) j(u)}^{-1 / 2} M_n(\theta)+\max _{s \geq j(u)} \bar{S}_{r(u) s}-S_{\theta}(u) \\
& \leq\left(C_2(1-\epsilon_n) n \delta_n\right)^{-1 / 2} M_n(\theta)+S_{\theta}(u_{r(u)})-S_{\theta}(u) \\
& \leq\left(C_2(1-\epsilon_n) n \delta_n\right)^{-1 / 2} M_n(\theta)+C_1 \delta_n^{\alpha}.
\end{align*}

In the first step, we used antitonicity of $u \mapsto \widehat{S}_{\theta}(u)$, and in the second last step we used antitonicity of $u \mapsto S_{\theta}(u)$, and the last step utilizes that by \cref{ass:3-Lipschitz}. But on the event $\left\{M_n(\theta) \leq(D \log n)^{1 / 2}\right\}$, the previous considerations implies that
\begin{align*}
\underset{u \in \mathcal{U}_n}{\sup}(\widehat{S}_{\theta}(u)-S_{\theta}(u)) & \leq\left(C_2(1-\epsilon_n) n \delta_n\right)^{-1 / 2}(D \log n)^{1 / 2}+C_1 \delta_n^{\alpha}=C \rho_n^{\alpha /(2\alpha +1)},
\end{align*}
where $C=\sqrt{\kappa D/C_2} + C_1\kappa^{\alpha}$, and we recall that $C_2 = 1/|\mathcal{U}|$ and $D$ is any real value strictly greater than $1$. But $\underset{u \in \mathcal{U}_n}{\sup}(S_{\theta}(u)-\widehat{S}_{\theta}(u))\leq C \rho_n^{\alpha /(2\alpha +1)}$ happens in $A_{n,\gamma} \cap \{M_n(\theta) \leq(D \log n)^{1 / 2}\}$ which has probability
\begin{align*}
\mathbb{P}(A_{n,\gamma} \cap \{M_n(\theta) \leq(D \log n)^{1 / 2}\}) &= 1-\mathbb{P}(A_{n,\gamma}^c \cup \{M_n(\theta) \geq (D \log n)^{1 / 2}\}) \\
&\geq 1-\mathbb{P}(A_{n,\gamma}^c)-\mathbb{P}( M_n(\theta) \geq (D \log n)^{1 / 2})\\
&= \mathbb{P}(A_{n,\gamma})+\mathbb{P}( M_n(\theta) \leq (D \log n)^{1 / 2})-1\\
&\geq 1-\frac{1}{2(n+2)^{\gamma -2}}-\left(\frac{n+1}{n^D}\right)^2\\
&\geq 1-\frac{1}{(n+2)^{\gamma -2}} \geq 1-\frac{1}{n^{\gamma -2}},
\end{align*}
where we used that by \cref{lemma:Hoeffding}, for any fixed $D>1$ we have $\mathbb{P}\left(M_n(\theta) \leq(D \log n)^{1 / 2}\right) \geq 1-(\tfrac{n+1}{n^D})^2$ and by \cref{lemma:A_n} for any $\gamma >2$ we have $\mathbb{P}\left(A_{n,\gamma}\right) > 1-\frac{1}{2(n+2)^{\gamma -2}}$. The last two inequalities come from choosing $\gamma=D$ sufficiently large. 

Analogously one can show that on $\left\{M_n \leq(D \log n)^{1 / 2}\right\}$,
\begin{align*}
\underset{u \in \mathcal{U}_n}{\sup}(S_{\theta}(u)-\widehat{S}_{\theta}(u))&\leq\left(n \delta_n\right)^{-1 / 2}(D \log n)^{1 / 2}+C_1 \delta_n^{\alpha}=C \rho_n^{\alpha /(2\alpha +1)},
\end{align*}
with the same constant $C$ and with the same probability tail.
\end{proof}

\subsection{Proof of \cref{thm:regret_upper_bound}}\label{sec:regret_upper_bound}
Fix $k\geq 2$ and define $n_k=|I_k|$ and $\widetilde{n}_k=|\widetilde{I}_k|$ and $a_k = |E_k| = n_k + \widetilde{n}_k$. Let $S_0\left(p \mid x\right) \triangleq S_0(p -\theta_0^{\top} x)$ and $\widehat{S}_k\left(p \mid x\right) \triangleq \widehat{S}_k(p -\widehat{\theta}_k^{\top} x)$. For the exploration phase $\mathbb{E}[\sum_{t \in E_k}r_t(p_t^*)-r_t(p_t)]\leq p_{\max}|E_k| \lesssim |E_k|$. Now fix $t \in E'_k$
\begin{align*}
&r_t(p_t^*)-r_t(p_t)\nonumber\\
& =p_t^* S_0\left(p_t^* \mid x_t\right)-p_t S_0\left(p_t \mid x_t\right) \nonumber\\
&=\left\{p_t^* S_0\left(p_t^* \mid x_t\right)-p_t^* \widehat{S}_k\left(p_t^* \mid x_t\right)\right\}+\underset{\leq 0 \text{ by \cref{eq:opt_p}}}{\underbrace{\left\{p_t^* \widehat{S}_k\left(p_t^* \mid x_t\right)-p_t \widehat{S}_k\left(p_t \mid x_t\right)\right\}}}+ \left\{p_t \widehat{S}_k\left(p_t \mid x_t\right)-p_t S_0\left(p_t \mid x_t\right)\right\} \nonumber\\
&\leq p_{\max}\left| S_0(p_t^* \mid x_t)- \widehat{S}_k(p_t^* \mid x_t)\right|+ p_{\max}\left| \widehat{S}_k\left(p_t \mid x_t\right)-S_0\left(p_t \mid x_t\right)\right| \nonumber\\
& = R_{k,t}(p^*_t)+R_{k,t}(p_t),
\end{align*}
where $R_{k,t}(q) \triangleq |\widehat{S}_k(q-\widehat{\theta}_k^{\top}x_t)-S_0\left(q-\theta_0^{\top}x_t\right)|$ for $q\in\{p_t^*,p_t\}$, $t \in E_k'$.

By \cref{lem:J2J3} there exists $k_0$ such that for $k \geq k_0$, $\mathbb{E}(R_{k,t}(q))\lesssim \left(\nicefrac{\log \widetilde{n}_k}{\widetilde{n}_k}\right)^{\nicefrac{\alpha}{2\alpha+1}}+ \left(\nicefrac{d\log n_k}{n_k}\right)^{\nicefrac{\alpha}{2}}$ for $q\in\{p_t^*,p_t\}$ with $t \in E'_k$. Summing up for all $t \in E_k'$, yields that 
$$
\mathbb{E}\left[\sum_{t \in E'_k} r_t(p_t^*)-r_t(p_t)\right] \lesssim |E_k'|\left[ \left(\nicefrac{\log \widetilde{n}_k}{\widetilde{n}_k}\right)^{\nicefrac{\alpha}{2\alpha+1}}+ \left(\nicefrac{d\log n_k}{n_k}\right)^{\nicefrac{\alpha}{2}} \right].
$$
Merging with the exploration phase of episode $k$ we get
$$
\mathbb{E}\left[\sum_{t \in \mathcal{J}_k} r_t(p_t^*)-r_t(p_t)\right]  \lesssim |E_k|+|E_k'|\left[ \left(\nicefrac{\log \widetilde{n}_k}{\widetilde{n}_k}\right)^{\nicefrac{\alpha}{2\alpha+1}}+ \left(\nicefrac{d\log n_k}{n_k}\right)^{\nicefrac{\alpha}{2}} \right].
$$
Using that $n_{k} =\widetilde{n}_{k}=\frac{1}{2}a_k =\frac{1}{2}|E_k|= \frac{1}{2}d^{\xi}(\tau_1 2^{k-1})^{\nu} \propto d^{\xi} 2^{k\nu}$ for $\xi,\nu >0$ to be determined such that they minimize the total regret, and that $|E_k'| \leq  |\mathcal{J}_k|= \tau_1 2^{k-1}\propto 2^k$ we get that the RHS of the last inequality is
\begin{align*}
\mathbb{E}\left[\sum_{t \in \mathcal{J}_k} r_t(p_t^*)-r_t(p_t)\right]&\lesssim d^{\xi} 2^{k\nu} + 2^k\left[ \left(\tfrac{\log (d^{\xi} 2^{k\nu})}{d^{\xi} 2^{k\nu}}\right)^{\nicefrac{\alpha}{2\alpha+1}}+ \left(\tfrac{d\log (d^{\xi} 2^{k\nu})}{d^{\xi} 2^{k\nu}}\right)^{\nicefrac{\alpha}{2}} \right] \\
&\lesssim d^{\xi} 2^{k\nu} + 2^k \left(\tfrac{\log (d^{\xi} 2^{k\nu})}{d^{\xi} 2^{k\nu}}\right)^{\nicefrac{\alpha}{2\alpha+1}}+ 2^k\left(\tfrac{d\log (d^{\xi} 2^{k\nu})}{d^{\xi} 2^{k\nu}}\right)^{\nicefrac{\alpha}{2}} \\
&\lesssim d^{\xi} 2^{k\nu} + d^{-\tfrac{\xi \alpha}{2\alpha +1}}2^{k(1-\tfrac{\nu \alpha}{2\alpha +1})}[k+\log(d)]^{\tfrac{\alpha}{2\alpha +1}} + d^{\tfrac{\alpha}{2}(1-\xi)} 2^{k(1-\tfrac{\nu \alpha}{2})}[k+\log(d)]^{\tfrac{\alpha}{2}}.
\end{align*}
The exponents of the factor $d$ are $\xi$,$-\tfrac{\xi \alpha}{2\alpha +1}$ and $\tfrac{\alpha}{2}(1-\xi)$. As the second exponent is always negative we equalize the first and the third exponent, i.e. $\xi = \tfrac{\alpha}{2}(1-\xi)$ to get $\xi^* = \frac{\alpha}{\alpha + 2}$. The exponents of the exponential factor $2^k$ are $\nu$, $1-\tfrac{\nu \alpha}{2\alpha +1}$ and $1-\tfrac{\nu \alpha}{2}$. Equalizing the first two factors, we get $\nu^* = \tfrac{2\alpha +1}{3\alpha +1}$, however $\nu^* > (1-\tfrac{\nu^* \alpha}{2})$ for $\alpha > 1/2$, is equal for $\alpha = 1/2$ and less for $\alpha < 1/2$. Then for $\alpha \geq 1/2$ we equalizing the first and last factors, obtaining $\nu = 1-\tfrac{\nu \alpha}{2}$ to get $\nu^* = \tfrac{2}{2+\alpha}$.

{\bf Case $\alpha > 1/2$.} The expected regret in episode $k$, $\mathbb{E}\left[\sum_{t \in \mathcal{J}_k} r_t(p_t^*)-r_t(p_t)\right]$ is upper bounded by 

\begin{align*}
&2^{k\tfrac{2\alpha +1}{3\alpha +1}}(d^{\tfrac{\alpha}{\alpha + 2}} + d^{-\tfrac{\alpha^2}{(2\alpha +1)(\alpha +2)}}[k+\log(d)]^{\tfrac{\alpha}{2\alpha +1}}+ d^{\tfrac{\alpha}{\alpha + 2}}[k+\log(d)]^{\tfrac{\alpha}{2}}) \lesssim 2^{k\tfrac{2\alpha +1}{3\alpha +1}}d^{\tfrac{\alpha}{\alpha + 2}}[k+\log(d)]^{\tfrac{\alpha}{2}},
\end{align*}

where we used that $\tfrac{\alpha}{2\alpha+1} < \tfrac{\alpha}{2}$ for $\alpha \in (1/2,1]$. Putting together the phases we get
\begin{align*}
R(T)&= \mathbb{E}\left[\sum_{k =k_0}^K\sum_{t \in \mathcal{J}_k} r_t(p_t^*)-r_t(p_t)\right]\lesssim 2^{K\tfrac{2\alpha +1}{3\alpha +1}}d^{\tfrac{\alpha}{\alpha + 2}}[K+\log(d)]^{\tfrac{\alpha}{2}}\lesssim T^{\tfrac{2\alpha +1}{3\alpha +1}}d^{\tfrac{\alpha}{\alpha + 2}}\log^{\tfrac{\alpha}{2}}(dT),
\end{align*}
where we used that $K=\left\lceil\log \left(T / \tau_1\right)+1\right\rceil$.

{\bf Case $\alpha \leq 1/2$.} The expected retreat in episode $k$, $\mathbb{E}\left[\sum_{t \in \mathcal{J}_k} r_t(p_t^*)-r_t(p_t)\right]$, is upper bounded by

\begin{align*}
&2^{k\tfrac{2}{2+\alpha}}(d^{\tfrac{\alpha}{\alpha + 2}} + d^{-\tfrac{\alpha^2}{(2\alpha +1)(\alpha +2)}}[k+\log(d)]^{\tfrac{\alpha}{2\alpha +1}} + d^{\tfrac{\alpha}{\alpha + 2}}[k+\log(d)]^{\tfrac{\alpha}{2}})\lesssim 2^{k\tfrac{2}{2+\alpha}}d^{\tfrac{\alpha}{\alpha + 2}}[k+\log(d)]^{\tfrac{\alpha}{2\alpha +1}},
\end{align*}

where we used that $\tfrac{\alpha}{2\alpha+1} \geq \tfrac{\alpha}{2}$ for $\alpha \in (0,1]$. Putting together the phases we get
\begin{align*}
R(T)= \mathbb{E}\left[\sum_{k =k_0}^K\sum_{t \in \mathcal{J}_k} r_t(p_t^*)-r_t(p_t)\right]\lesssim 2^{K\tfrac{2}{2+\alpha}}d^{\tfrac{\alpha}{\alpha + 2}}[K+\log(d)]^{\tfrac{\alpha}{2\alpha +1}}\lesssim T^{\tfrac{2}{2+\alpha}}d^{\tfrac{\alpha}{\alpha + 2}}\log^{\tfrac{\alpha}{2\alpha +1}}(dT),
\end{align*}
where we used that $K=\left\lceil\log \left(T / \tau_1\right)+1\right\rceil$, which concludes the proof.

\subsection{Proof of \cref{lem:J2J3}}\label{sec:groeneboom}
Let $n_{k}=|I_{k}|$ and $\widetilde{n}_k = |\widetilde{I}_k|$ and $t \in E_k'$. Define the event $\mathcal{E}_k = \{\|\widehat{\theta}_k-\theta_0\|\leq R_{n_{k}}\}$ where we recall
$$
R_{n_{k}} \propto \sqrt{\frac{d\log n_{k}}{n_{k}}},
$$
as defined in \cref{lemma:1}, and 
$$
R_{k,t}(q) = |\widehat{S}_k(q-\widehat{\theta}_k^{\top}x_t)-S_0\left(q-\theta_0^{\top}x_t\right)|, \quad q \in \{p_t,p_t^*\}.
$$
Recall that
$$
\widehat{\theta}_k = \text{OLS}\{(x_t,y_t)\}_{t\in I_k}, \quad \widehat{S}_{k} = \text{Antitonic}\{(w_t,y_t)\}_{t \in \widetilde{I}_k},
$$
and define $S_{k}=S_{\widehat{\theta}_k}$, where by definition in \cref{eq:expected_y_t_theta}
$$
S_k(u)=S_{\widehat{\theta}_k}(u)= \mathbb{E}_{x}[S_0(u+(\widehat{\theta}_k-\theta_0)^{\top}x)].
$$
Now let $q \in \{p_t,p_t^*\}$ for some $t \in E_k'$. We can write
$$
R_{k,t}(q) = R_{k,t}(q)\mathbb{I}(\mathcal{E}_k)+R_{k,t}(q)\mathbb{I}(\mathcal{E}_k^c).
$$ 

\textbf{Analyzing the $R_{k,t}(q)\mathbb{I}(\mathcal{E}_k^c)$:}

By \cref{lemma:1} we have $\mathbb{E}[R_{k,t}(q)\mathbb{I}(\mathcal{E}_k^c)] \leq 2\mathbb{P}(\mathcal{E}_k^c)=Q_{n_{k}}= 2 e^{-c_{1} c_{\min }^{2}n_{k} / 16}+\frac{2}{n_{k}}$. 

\textbf{Analyzing the $R_{k,t}(q)\mathbb{I}(\mathcal{E}_k)$:}

$R_{k,t}(q)$ is less or equal than two times
\begin{align}\label{ineq:lemma_groeneboom}
\underset{=A}{\underbrace{|\widehat{S}_{k}(q-\widehat{\theta}_k^{\top} x_t)-S_{k}(q-\widehat{\theta}_k^{\top} x_t)|\mathbb{I}(\mathcal{E}_k)}} + \underset{=B}{\underbrace{|S_{k}(q-\widehat{\theta}_k^{\top} x_t)-S_k(q-\theta_0^{\top} x_t)|\mathbb{I}(\mathcal{E}_k)}}+ \underset{=C}{\underbrace{|S_{k}(q-\theta_0^{\top} x_t)-S_0(q-\theta_0^{\top} x_t)|\mathbb{I}(\mathcal{E}_k)}}
\end{align}

\textbf{Analyzing $A$ on $\mathcal{E}_k$:} Define the event $\mathcal{S}_k = \{ \sup_{u \in \mathcal{U}}|\widehat{S}_{k}(u)-S_{k}(u)|\leq C \rho_{\widetilde{n}_k}^{\alpha/(2\alpha+1)}\}$, where $\rho_n = \log(n) /n$. For $n_{k}$ sufficiently large, by \cref{thm:Dumbgen} we have that 
\begin{align*}
\mathbb{E}(A) &= \mathbb{E}(A\mathbb{I}(\mathcal{S}_k, \cap \mathcal{E}_k)) + \mathbb{E}(A\mathbb{I}(\mathcal{S}_k^c \cap \mathcal{E}_k)) \\
&\leq \mathbb{E}\left(\sup_{u \in \mathcal{U}}|\widehat{S}_{k}(u)-S_{k}(u)|\mathbb{I}(\mathcal{S}_k \cap \mathcal{E}_k)\right) + 2\mathbb{P}(\mathcal{S}_k^c)\mathbb{P}(\mathcal{E}_k)\\
&\lesssim C \left(\frac{\log \widetilde{n}_k}{\widetilde{n}_k}\right)^{\alpha/(2\alpha+1)}\mathbb{P}(\mathcal{S}_k \cap \mathcal{E}_k)+ 2\mathbb{P}(\mathcal{S}_k^c)\\
&\lesssim  \left(\frac{\log \widetilde{n}_k}{\widetilde{n}_k}\right)^{\alpha/(2\alpha+1)}+ 2\frac{1}{\widetilde{n}_k^{\gamma-2}}\\
&\lesssim \left(\frac{\log \widetilde{n}_k}{\widetilde{n}_k}\right)^{\alpha/(2\alpha+1)},
\end{align*}
where we chose $\gamma \geq 3$.

\textbf{Analyzing $B$ on $\mathcal{E}_k$:}
By \cref{proposition:S_theta_properties}, $S_k$ is $\alpha$-H\"older, then $\mathbb{E}[B\mathbb{I}(\mathcal{E}_k)]\lesssim \|\widehat{\theta}_k-\theta_0\|_2^{\alpha} \leq R_{n_k}^{\alpha}$. 

\textbf{Analyzing $C$ on $\mathcal{E}_k$:}
By \cref{proposition:S_theta_properties} we have $|S_{k}(u)-S_0(u)|\mathbb{I}(\mathcal{E}_k) \lesssim \|\widehat{\theta}_k-\theta_0\|_2^{\alpha} \leq R_{n_k}^{\alpha}$.

\textbf{Combining the terms $R_{k,t}(q)\mathbb{I}(\mathcal{E}_k^c)$ and $R_{k,t}(q)\mathbb{I}(\mathcal{E}_k)$ from \cref{ineq:lemma_groeneboom}:} we get

$$
\sup_{q}R_{k,t}(q) \lesssim \left(\frac{\log \widetilde{n}_k}{\widetilde{n}_k}\right)^{\alpha/(2\alpha+1)}+ \left(\frac{d\log n_k}{n_k}\right)^{\alpha/2}.
$$

\newpage
\section{Additional Plots of \cref{sec:simulation_theoretical}}\label{Appendix:other_plots}

\begin{figure}[h]
    \centering
    \vspace{.05in}
    \includegraphics[width=0.45\textwidth]{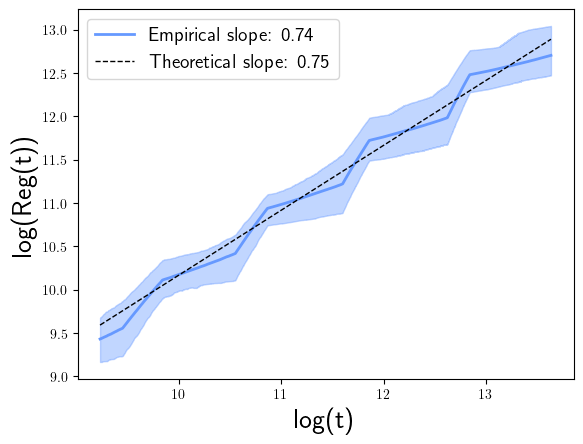}
    \vspace{.05in}
    \caption{This plot was generated using as true $F_0$ the one considered in \citet{fan2021policy} with density $f_0(z) = 6\left(\frac{1}{4} -z^2\right)\mathbf{1}\{z\in(-1/2,1/2)\}$. We repeated the simulation 36 times and the corresponding 95\% confidence interval. The plot is in $\log_2$-$\log_2$ scale to show the regret rate (empirical slope): a slope of $\eta$ indicated an $\mathcal{O}(T^{\eta})$ regret. The black dashed line corresponds to our theoretical regret upper bound of $3/4$. The estimated slope is very close to that value.}
\end{figure}
\end{document}